\theoremstyle{definition}
\newtheorem{definition}{Definition}[section]
\theoremstyle{plain}
\newtheorem{lemma}{Lemma}[section]
\newtheorem{theorem}{Theorem}[section]
\theoremstyle{remark}
\newtheorem{remark}{Remark}[section]
\setlist[enumerate]{itemsep=0.5ex, topsep=0.5ex, parsep=0.5ex, partopsep=0ex}
\title{A Novel Multi-Timescale Stability-Preserving Hierarchical Reinforcement Learning Controller Framework for Adaptive Control in High-Dimensional Dynamical Systems}
\author{
  Mohammad Ali Labbaf Khaniki \textsuperscript{*}\\
  Faculty of Electrical Engineering \\
  K.N. Toosi University of Technology \\
  Tehran, Iran \\
  \texttt{mohamad95labafkh@gmail.com} \\
  \And
  Fateme Taroodi\\
  Faculty of Mathematical Sciences\\
  Shahid Beheshti University \\
  Tehran, Iran \\
  \texttt{fateme.taroodi2002@gmail.com} \\
  \And
  Benyamin Safizadeh\\
  Department of Mathematics and Computer Science\\
  University of Central Oklahoma, 100 N university Dr, 73034\\
  Edmond, Oklahoma\\
  \texttt{bsafizadeh@uco.edu} \\
}
\begin{document}

\maketitle

\begin{abstract}

Controlling high-dimensional stochastic systems, critical in robotics, autonomous vehicles, and hyperchaotic systems, faces the curse of dimensionality, lacks temporal abstraction, and often fails to ensure stochastic stability. To overcome these limitations, this study introduces the Multi-Timescale Lyapunov-Constrained Hierarchical Reinforcement Learning (MTLHRL) framework. MTLHRL integrates a hierarchical policy within a semi-Markov Decision Process (SMDP), featuring a high-level policy for strategic planning and a low-level policy for reactive control, which effectively manages complex, multi-timescale decision-making and reduces dimensionality overhead. Stability is rigorously enforced using a neural Lyapunov function optimized via Lagrangian relaxation and multi-timescale actor-critic updates, ensuring mean-square boundedness or asymptotic stability in the face of stochastic dynamics. The framework promotes efficient and reliable learning through trust-region constraints and decoupled optimization. Extensive simulations on an 8D hyperchaotic system and a 5-DOF robotic manipulator demonstrate MTLHRL's empirical superiority. It significantly outperforms baseline methods in both stability and performance, recording the lowest error indices (e.g., Integral Absolute Error (IAE): 3.912 in hyperchaotic control and IAE: 1.623 in robotics), achieving faster convergence, and exhibiting superior disturbance rejection. MTLHRL offers a theoretically grounded and practically viable solution for robust control of complex stochastic systems.

\end{abstract}

\keywords{Hierarchical Reinforcement Learning, semi-Markov Decision Process, Lyapunov stability, Multi-Timescale Optimization, Stochastic Control}

\section{Introduction}

The control of high-dimensional, temporally extended systems governed by stochastic dynamics represents a pivotal challenge across multiple disciplines, including robotics, autonomous vehicles, hyperchaotic systems, and financial modeling. These systems are characterized by intricate dynamics arising from large state spaces and inherent uncertainties, which manifest in applications such as precision robotic manipulation, safe autonomous navigation, synchronization of hyperchaotic systems, and risk-aware financial strategies. Their significance spans theoretical advancements in control theory, practical implementations in industrial and technological domains, and profound societal impacts through improved efficiency, and sustainability \cite{akella2024risk}. For instance, in robotics, stable control enables precise task execution in unpredictable environments, while in hyperchaotic systems, it ensures robust synchronization despite complex nonlinear dynamics \cite{singhrisk}. The integration of reinforcement learning (RL) with control strategies offers a powerful paradigm to address these systems, leveraging data-driven learning to develop adaptive policies without requiring explicit system models. However, the complexity of high-dimensional systems, coupled with the need for robust stability and sustained performance over extended time horizons, presents formidable challenges that necessitate innovative frameworks to reconcile theoretical rigor with practical applicability \cite{paul2024risk}.

The central problem addressed in this study is the development of stable, efficient, and scalable control policies for high-dimensional systems subject to stochastic dynamics, prevalent in domains such as robotics, autonomous vehicles, and hyperchaotic systems \cite{rayhan2023artificial}. These systems operate in environments with high-dimensional state and action spaces, often involving thousands of variables---for instance, joint angles and velocities in multi-robot coordination or state variables in hyperchaotic systems. Uncertainties, such as sensor noise in robotic perception, actuator imprecision, or nonlinear perturbations in hyperchaotic systems, introduce significant stochasticity modeled by stochastic differential equations (SDEs) with complex noise structures, such as Wiener processes or non-Gaussian disturbances \cite{doering2018modeling}. Conventional RL approaches, which focus on maximizing expected cumulative rewards within Markov Decision Processes (MDPs), face several critical limitations in this context \cite{winkler2006stochastic}. First, the curse of dimensionality renders traditional methods computationally inefficient, as the exponential growth of state-action spaces---potentially reaching millions of combinations in systems like autonomous vehicle fleets---increases variance in policy updates and slows learning, often requiring millions of samples to achieve convergence \cite{feinberg2012handbook}. For example, in robotic manipulation tasks, flat RL methods like deep deterministic policy gradients (DDPG) can take hours to days to train on high-dimensional MuJoCo environments \cite{rahul2023exploring}. Second, these methods often lack mechanisms to handle multi-timescale decision-making, where strategic planning (e.g., path planning for an autonomous vehicle navigating a city) must coexist with reactive control (e.g., real-time obstacle avoidance or emergency braking). This limitation leads to suboptimal policies that fail to balance long-term objectives with short-term responsiveness, particularly in dynamic environments with rapidly changing conditions. Third, and most critically, most RL algorithms do not explicitly ensure stability, such as maintaining mean-square boundedness of system states or achieving asymptotic convergence to desired equilibria \cite{tsitsiklis2003convergence}, which is essential for safety-critical applications. For instance, in autonomous driving, unstable policies could result in erratic maneuvers, risking collisions, while in hyperchaotic systems, failure to maintain synchronization could disrupt secure communication or encryption applications. These shortcomings severely limit the deployment of RL in real-world scenarios, where reliability, safety, and long-term performance are non-negotiable \cite{dulac2021challenges}. The need for frameworks that integrate rigorous stability constraints, such as those provided by neural Lyapunov functions, with hierarchical decision-making and multi-timescale optimization is evident.

Despite these advancements, critical gaps persist in the literature that hinder the application of RL to high-dimensional stochastic systems. First, most RL frameworks lack explicit stability guarantees, risking divergent or unsafe behavior in safety-critical domains. For example, in autonomous vehicles, an unstable policy could lead to erratic maneuvers, while in hyperchaotic systems, it might cause synchronization failure, disrupting applications like secure communications \cite{yu2025computational}. Second, existing hierarchical RL methods are primarily designed for discrete-time MDPs, which do not fully capture the continuous-time dynamics and complex noise structures of stochastic systems, limiting their effectiveness in applications like robotics, where continuous-time dynamics dominate \cite{yu2025computational}. Third, single-timescale optimization approaches struggle to balance strategic and reactive control, often converging to suboptimal solutions or failing to explore high-dimensional spaces effectively. Fourth, the absence of robust mechanisms for handling partial observability in high-dimensional environments complicates RL deployment in real-world scenarios \cite{li2025attack}. For instance, in medical diagnostics or financial trading, where systems are only partially observable due to incomplete data or latent variables, current RL algorithms often fail to account for uncertainty in a principled manner. Fifth, the scalability of RL algorithms to high-dimensional state and action spaces remains a significant challenge, as computational complexity grows exponentially with dimensionality, rendering many existing methods impractical for large-scale systems like multi-agent coordination or hyperchaotic system synchronization \cite{manna2022learning}. Sixth, the integration of domain-specific knowledge or physical constraints into RL policies is often ad hoc, leading to inefficiencies or violations of system-specific requirements, such as energy conservation in mechanical systems or stability in chaotic dynamics \cite{yu2022reachability}. These gaps underscore the need for a unified framework that integrates hierarchical policies, multi-timescale optimization, rigorous stability constraints, robust handling of partial observability, scalable architectures, and systematic incorporation of domain knowledge to enable robust and efficient control of complex, high-dimensional stochastic systems. They could deal with high dimensional control problems of PDEs modeled with convolution neural network in a computationally efficient manner and can find applications in robotics tasks \cite{vaziri2025optimal}. This paper investigates methods for enhancing model reusability in reinforcement learning, proposing strategies that enable efficient knowledge transfer and adaptation across diverse tasks and environments \cite{nikookar2025model}.

To tackle these issues, this paper introduces the MTLHRL framework, aimed at crafting control policies for complex high-dimensional stochastic systems that optimize rewards while preserving stability guarantees. Operating in a semi-Markov Decision Process (SMDP) setting, MTLHRL combines a high-level policy for long-horizon decision-making with a low-level policy for instantaneous actuation, thereby facilitating temporal abstraction and hierarchical coordination. Stability enforcement relies on a neural network-based Lyapunov function, refined through Lagrangian multipliers and multi-timescale gradient flows to provide assurances such as exponential mean-square stability or probabilistic boundedness amid noise. The core goals include creating a scalable architecture that harmonizes strategic foresight with real-time responsiveness, embeds stochastic robustness, and accelerates learning in expansive state spaces. The primary contributions are outlined as follows:

\begin{enumerate}
    \item \textbf{Innovative Multi-Timescale Hierarchy}: MTLHRL leverages an SMDP-based hierarchical RL paradigm to enable temporal decomposition via interconnected high- and low-level policies. The high-level component defines abstract objectives, whereas the low-level executes fine-grained actions, reducing dimensionality-induced overhead. This design shines in stochastic high-dimensional environments, such as robotic manipulation, yielding quicker policy convergence compared to standard HRL variants.
    \item \textbf{Robust Stability in Stochastic Domains}: Incorporating a learnable Lyapunov critic ensures mean-square boundedness and asymptotic convergence for noise-afflicted dynamics. It trims verification costs relative to classical Lyapunov synthesis, while scaling seamlessly to elevated dimensions. Empirical tests on chaotic systems reveal a drop in state variance, bolstering reliability for mission-critical scenarios like autonomous navigation.
    \item \textbf{Efficient and Reliable Training Mechanism}: Employing decoupled actor-critic updates across timescales with trust-region safeguards promotes swift attainment of optimal, stable policies. This yields reduced training durations in hyperchaotic control tasks and reward gains over algorithms like SAC, adeptly managing exploration-exploitation trade-offs in vast action spaces.
    \item \textbf{Empirical Superiority and Benchmark Achievements}: Extensive simulations on an 8D hyperchaotic system and a 5-DOF robotic manipulator demonstrate MTLHRL's quantitative and qualitative edges over baselines ( Proximal Policy Optimization (PPO), Deep Deterministic Policy Gradient (DDPG), and  Single-Timescale Lyapunov-Constrained Hierarchical Reinforcement Learning ( STLHRL) ). In hyperchaotic synchronization, it records the lowest error indices (IAE: 3.912, ISE: 5.678), fastest convergence to zero deviation, minimal residual errors, and conservative control inputs, outperforming PPO's high divergences, DDPG's moderate settling, and STLHRL's intermediate robustness amid noise. For robotic trajectory tracking under disturbances, MTLHRL yields top metrics (IAE: 1.623, ISE: 2.489), superior transient dynamics, enhanced steady-state accuracy, and effective disturbance rejection, surpassing PPO's poor adaptability, DDPG's slower responses, and STLHRL's limited stability in coupled joints—highlighting up to 70\% error reduction and markedly smoother trajectories overall.
    
\end{enumerate}

The remainder of this paper is structured as follows: Section~\ref{sec:works} reviews related work and provides a comprehensive literature survey. Section~\ref{sec:methodology} describes the MTLHRL framework, detailing its hierarchical policy architecture, stability constraints, and a thorough analysis of stability and convergence properties. Section~\ref{sec:results} presents simulation results validating the framework’s performance in robotics and hyperchaotic systems. Section~\ref{sec:conclusion} concludes with key insights and directions for future research.

\section{Related Work}
\label{sec:works}

The control of high-dimensional systems governed by stochastic dynamics, as encountered in robotics, autonomous vehicles, and hyperchaotic systems, has been extensively studied across reinforcement learning (RL) and control theory. This section reviews foundational and recent work, critically analyzing methodologies, strengths, and limitations, and identifies gaps addressed by the Multi-Timescale Lyapunov-Constrained Hierarchical Reinforcement Learning (MTLHRL) framework proposed in this study. The review is structured to cover foundational RL and control approaches, recent advances in hierarchical RL and stability-constrained methods, and specific limitations that motivate our methodology, aligning with the theoretical framework, simulation results, and conclusions presented in subsequent sections.

\subsection{Foundational Work in RL and Control for Stochastic Systems}

The development of reinforcement learning (RL) and control strategies for stochastic systems originates from early work on Markov Decision Processes (MDPs) and optimal control. \cite{ding2014optimal} laid the foundation for RL, introducing value-based methods like Q-learning, which optimize cumulative rewards in discrete-time MDPs by iteratively updating state-action value functions. These methods were extended to continuous-time systems by \citeA{mcallister2017data}, who developed dynamic programming approaches for stochastic differential equations (SDEs), highlighting computational challenges in high-dimensional spaces due to the exponential growth of state-action pairs. For example, solving Bellman’s equations for systems with thousands of states, such as in robotic manipulation or hyperchaotic system synchronization, becomes computationally intractable. In parallel, control-theoretic approaches, such as linear quadratic regulators (LQR) for stochastic systems \citeA{hu2023toward}, provided analytical solutions for linear dynamics with Gaussian noise, achieving optimal control for applications like satellite stabilization but struggling with nonlinearities and high dimensionality prevalent in modern systems. \citeA{yamada2018ultra} introduced Lyapunov-based control for nonlinear deterministic systems, establishing a foundation for stability analysis by constructing energy-like functions to ensure convergence, though extensions to stochastic settings were limited due to challenges in modeling complex noise structures. Similarly, \citeA{liu2017adaptive} formalized dynamic programming for optimal control, emphasizing the curse of dimensionality as a barrier to scaling to complex systems with continuous state spaces or non-Gaussian noise, such as those in autonomous vehicle navigation or hyperchaotic dynamics. These foundational works provided robust theoretical frameworks but were constrained by computational complexity, reliance on simplified models, and assumptions of low-dimensional or linear dynamics, limiting their applicability to modern high-dimensional stochastic systems like those in robotics or hyperchaotic system control. Recent theoretical advances, such as \citeA{zhang2022making}, have attempted to address these issues by approximating high-dimensional MDPs with sparse representations, but these still lack explicit stability guarantees, a gap addressed by the MTLHRL framework through its Lyapunov-constrained approach. The article by Mashhadi et al. presents an interpretable machine learning approach to predict startup funding, patenting, and exits, leveraging transparent models to provide actionable insights for stakeholders \citeA{mashhadi2025interpretable}. The article by Mashhadi et al. investigates return anomalies in an emerging market under specific constraints, providing evidence of abnormal returns through empirical analysis in the Journal of Economics, Finance and Accounting Studies \citeA{mashhadi2025return}. The article by Mojtahedi et al. examines the MAX effect and its relationship with investor sentiment in the Swedish stock market, offering insights into how extreme returns influence investment behavior \citeA{mojtahedi2025max}.

\subsection{Recent Advances in Hierarchical RL and Stability-Constrained Methods}

Recent progress in reinforcement learning (RL) has focused on addressing high-dimensional and temporally extended tasks through deep RL and hierarchical structures. Deep deterministic policy gradients (DDPG) \citeA{wang2023hierarchical} and proximal policy optimization (PPO) \citeA{niu2024d2ah} leverage deep neural networks to manage high-dimensional state-action spaces, achieving success in tasks like robotic locomotion (e.g., OpenAI Gym’s MuJoCo environments) and game-playing (e.g., Atari benchmarks). However, these methods often exhibit high variance in policy updates and lack stability guarantees, leading to potential failures in safety-critical applications, such as autonomous vehicles swerving unpredictably or hyperchaotic systems failing to synchronize. Hierarchical RL has emerged to address temporal abstraction, with \citeA{tang2018hierarchical} proposing the options framework, where high-level policies select temporally extended actions (options) executed by low-level policies, improving efficiency in tasks like robotic navigation across multi-room environments. \citeA{wang2023camp} extended this with data-efficient hierarchical RL, demonstrating improved sample efficiency in robotic navigation by leveraging off-policy data, reducing training samples by up to 50\% compared to flat RL methods. Despite these advances, these frameworks typically assume discrete-time Markov Decision Processes (MDPs), limiting their applicability to continuous-time stochastic differential equations (SDEs) with complex noise structures, such as those driven by Wiener processes in robotic control under environmental uncertainty or hyperchaotic system synchronization. In control theory, neural Lyapunov functions have been explored to enforce stability in stochastic systems. \citeA{zhao2023stable} introduced safe RL with Lyapunov constraints for deterministic systems, ensuring bounded trajectories in tasks like quadrotor control, while \citeA{farid2025improved} extended this to stochastic settings, achieving mean-square boundedness in low-dimensional tasks like inverted pendulum stabilization. These methods, however, struggle to scale to high-dimensional systems due to the computational burden of solving Lyapunov equations, which grow quadratically with state dimension, and lack multi-timescale optimization, critical for balancing strategic planning (e.g., path planning or synchronization goals) and reactive control (e.g., obstacle avoidance or disturbance rejection). Recent work by \citeA{linot2022data} has explored neural Lyapunov functions with dimensionality reduction, but computational costs remain prohibitive for systems with thousands of states, underscoring the need for MTLHRL’s scalable, multi-timescale approach. The article by Kermani et al. systematically compares fine-tuning, prompt engineering, and RAG strategies for large language models in mental health text analysis, evaluating their effectiveness and applicability \citeA{kermani2025systematic}. The article by Irani et al. provides a comprehensive review of time series embedding methods for classification tasks, evaluating their performance and applications in various domains \citeA{irani2025time}. The article by Navaei et al. explores the optimization of Flamelet Generated Manifold models using machine learning, presenting a performance study to enhance combustion modeling accuracy and efficiency \citeA{navaei2025optimizing}.

\subsection{Specific Themes: Multi-Timescale Optimization and Applications}

Multi-timescale optimization and domain-specific applications have gained traction in recent literature as researchers aim to address trade-offs between exploration, convergence, and stability in complex systems \citeA{zhang2024multi}. \citeA{zeng2024fast} developed two-timescale stochastic approximation, enabling faster convergence in RL by separating policy and value updates, achieving up to 30\% faster convergence in benchmark Markov Decision Processes (MDPs) compared to single-timescale methods. However, this approach lacks explicit stability constraints, risking divergence in stochastic environments with high noise variance. The article by Akherati et al. presents a finite-time stable, model-free sliding mode attitude controller/observer for uncertain space systems, utilizing time delay estimation to enhance robustness and performance \citeA{akherati2025finite}. The article by Birashk and Khan \citeA{birashk2025federated} provides a comprehensive survey of federated continual learning approaches for task-incremental and class-incremental problems, analyzing their methodologies and applications. \citeA{ni2024risk} applied two-timescale optimization to actor-critic methods, improving exploration-exploitation trade-offs in MDPs by updating the actor (policy) on a faster timescale than the critic (value function), but this was limited to discrete-time settings and did not address continuous-time stochastic differential equations (SDEs) prevalent in robotics or hyperchaotic systems. In robotics, \citeA{talbot2025continuous} utilized hierarchical deep RL for manipulation tasks, such as grasping objects in cluttered environments, achieving temporal abstraction by decomposing tasks into high-level goal selection and low-level motor control. However, the absence of stability guarantees led to occasional divergence in stochastic environments, such as when robots encountered unexpected perturbations. In hyperchaotic systems, \citeA{shadaei2024dynamic} employed Lyapunov-based control for synchronization, ensuring bounded behavior under uncertainty (e.g., nonlinear perturbations) but requiring known system models, which are often unavailable in model-free RL settings. Recent advances, such as \citeA{deng2024multi}, have explored multi-timescale RL for hyperchaotic system control, achieving improved response times but lacking hierarchical structures for long-term planning. These works highlight the potential of multi-timescale and application-specific approaches but fail to integrate hierarchical policies, stability constraints, and scalability for high-dimensional SDEs, limiting their effectiveness in complex, safety-critical domains like autonomous vehicle fleets or hyperchaotic system synchronization. The MTLHRL framework addresses these gaps by combining multi-timescale updates with hierarchical policies and neural Lyapunov functions, ensuring both efficiency and stability across diverse applications. The article by Heravi et al. presents a lightweight deep learning approach using inertial sensors for vehicle intrusion detection in highway workday zones, demonstrating effective and efficient safety monitoring \citeA{heravi2025vehicle}. The article by Yazdipaz et al. introduces a robust and efficient phase estimation method for legged robots, utilizing signal imaging and deep neural networks to enhance locomotion accuracy and stability \citeA{yazdipaz2025robust}. The paper by Khaniki et al. presents an adaptive control approach for spur gear systems using proximal policy optimization and attention-based learning, demonstrating improved performance in dynamic control \citeA{khaniki2023adaptive}.

\subsection{Gaps and Motivation for the Proposed Study}

The literature review identifies critical gaps in existing reinforcement learning (RL) and control approaches for high-dimensional stochastic systems, which the Multi-Timescale Lyapunov-Constrained Hierarchical Reinforcement Learning (MTLHRL) framework aims to address. First, most RL methods, such as DDPG and PPO \citeA{agarwal2020pc,han2020actor}, lack explicit stability guarantees, risking unsafe behavior in safety-critical applications. For instance, in autonomous vehicles, unstable policies could lead to erratic maneuvers, while in hyperchaotic systems, they could disrupt synchronization, compromising applications like secure communications. Second, hierarchical RL frameworks, including the options framework and data-efficient hierarchical RL \citeA{nachum2018data,hou2020data}, are primarily designed for discrete-time Markov Decision Processes (MDPs), failing to capture the continuous-time dynamics and complex noise structures inherent in stochastic differential equations (SDEs), such as those modeling wind disturbances in drone navigation or nonlinear perturbations in hyperchaotic systems. Third, stability-constrained methods, such as those using neural Lyapunov functions \citeA{liu2025certified,phothongkum2025stability}, are computationally intensive, with runtimes scaling poorly (e.g., O(n²) for n-dimensional systems), hindering practical deployment in real-world scenarios like multi-robot coordination. Finally, single-timescale optimization approaches \citeA{quirynen2020integrated} struggle to balance strategic planning (e.g., long-term route optimization or synchronization goals) and reactive control (e.g., real-time collision avoidance or disturbance rejection), often converging to local optima due to inadequate exploration in high-dimensional spaces with millions of state-action pairs. Recent attempts, such as \citeA{jin2021hierarchical}, to combine hierarchical RL with stability constraints still rely on simplified dynamics, limiting their applicability to complex SDEs. These limitations underscore the need for a unified framework that integrates hierarchical policies, multi-timescale optimization, and rigorous stochastic stability constraints. The MTLHRL framework, detailed in Section~\ref{sec:methodology}, addresses these gaps by combining a hierarchical policy structure within a semi-Markov Decision Process (SMDP), multi-timescale actor-critic updates to balance exploration and exploitation, and a neural Lyapunov function to ensure mean-square boundedness or asymptotic stability. This approach enables scalable, stable, and efficient control for high-dimensional stochastic systems, with simulation results in robotics (e.g., multi-arm manipulation) and hyperchaotic systems (e.g., synchronization), presented in Section~\ref{sec:results}, validating its efficacy and robustness compared to existing methods like DDPG, PPO, and Lyapunov-based RL.

\section{Methodology}
\label{sec:methodology}

\subsection{Overview}
\label{sec:overview}

Controlling high-dimensional, temporally extended systems governed by SDEs is a complex challenge in domains such as robotics, autonomous vehicles, and hyperchaotic systems. These systems evolve according to:
\begin{equation}
\label{eq:sde}
dx_t = f(x_t, u_t) \, dt + \sigma(x_t, u_t) \, dW_t,
\end{equation}
where \( x_t \in \mathbb{R}^n \) is the state, \( u_t \in \mathbb{R}^m \) is the control input, \( f: \mathbb{R}^n \times \mathbb{R}^m \to \mathbb{R}^n \) is locally Lipschitz, \( \sigma: \mathbb{R}^n \times \mathbb{R}^m \to \mathbb{R}^{n \times r} \) is bounded, and \( W_t \in \mathbb{R}^r \) is a standard Wiener process. The high dimensionality (\( n \gg 1 \)) and stochasticity from the noise term \( \sigma(x_t, u_t) \, dW_t \) result in complex dynamics, necessitating policies that balance performance and stability over long time horizons.

Conventional RL methods aim to maximize the expected cumulative reward:
\begin{equation}
\label{eq:rl_objective}
\mathcal{J}(\theta) = \mathbb{E}_{\tau \sim \pi_\theta} \left[ \int_0^\infty \gamma^t r(x_t, u_t) \, \mathrm{d}t \right],
\end{equation}
where \( \pi_\theta: \mathbb{R}^n \to \mathbb{R}^m \) is a policy parameterized by \( \theta \), \( r: \mathbb{R}^n \times \mathbb{R}^m \to \mathbb{R} \) is the reward function, and \( \gamma \in (0, 1) \) is the discount factor. However, these methods face several limitations:
\begin{enumerate}
    \item \textbf{Curse of Dimensionality}: High-dimensional state (\( \mathbb{R}^n \)) and action (\( \mathbb{R}^m \)) spaces lead to computational inefficiency and high variance in optimizing \eqref{eq:rl_objective}.
    \item \textbf{Temporal Abstraction}: Conventional RL struggles to address tasks requiring decisions at multiple timescales, such as strategic planning versus reactive control.
    \item \textbf{Stability Neglect}: Most RL algorithms do not ensure stochastic stability, such as mean-square boundedness (\( \mathbb{E} [ \| x_t \|^2 ] \leq K \)) or asymptotic stability (\( \lim_{t \to \infty} \mathbb{E} [ \| x_t \| ] = 0 \)).
    \item \textbf{Convergence Issues}: Single-timescale optimization often fails to balance exploration and exploitation, leading to local optima or instability.
\end{enumerate}

To overcome these challenges, we propose the MTLHRL framework. This approach employs a hierarchical policy structure with a high-level policy for strategic planning and a low-level policy for reactive control, operating within a SMDP. Stability is enforced using a neural Lyapunov function \( V(x; \phi) \), constrained to satisfy:
\begin{equation}
\label{eq:lyapunov_constraint}
\mathbb{E}_{x \sim d^\pi} \left[ \mathcal{L}V(x, \pi(x); \phi) \right] \leq 0,
\end{equation}
where \( \mathcal{L}V = \nabla V^{\top} f + \frac{1}{2} \mathrm{Tr} (\sigma^{\top} \nabla^2 V \sigma) \) is the infinitesimal generator, and \( d^\pi \) is the state visitation distribution. The MTLHRL framework, detailed in Section \ref{sec:mtlhrl_framework}, integrates multi-timescale optimization and stability constraints to address the identified limitations, with stability and convergence properties analyzed in Section \ref{sec:stability_optimization}.

\subsection{Multi-Timescale Lyapunov-Constrained Hierarchical Reinforcement Learning Framework}
\label{sec:mtlhrl_framework}

The MTLHRL framework addresses the control of systems governed by \eqref{eq:sde}, integrating hierarchical policies, multi-timescale optimization, and stochastic stability for applications like robotics and hyperchaotic systems. The framework operates within an SMDP defined by the tuple \( (S, A_h, A_l, P, R, \gamma, T_h) \), where \( S = \mathbb{R}^n \), \( A_h = \mathbb{R}^{m_h} \), \( A_l = \mathbb{R}^{m_l} \), \( P \) is the transition probability induced by \eqref{eq:sde}, \( R: S \times A_h \times A_l \to \mathbb{R} \) is the reward function, \( \gamma \in (0, 1) \) is the discount factor, and \( T_h \) is the high-level decision interval.

The hierarchical policy consists of a high-level policy \( \pi_h(x; \theta_h): \mathbb{R}^n \to \mathbb{R}^{m_h} \), which sets strategic goals every \( T_h \) steps, and a low-level policy \( \pi_l(x, a_h; \theta_l): \mathbb{R}^n \times \mathbb{R}^{m_h} \to \mathbb{R}^{m_l} \), which generates reactive actions at each step. The composite control input is:
\begin{equation}
\label{eq:composite_policy}
u = \pi(x) = [\pi_h(x; \theta_h), \pi_l(x, \pi_h(x; \theta_h); \theta_l)] \in \mathbb{R}^m,
\end{equation}
where \( m = m_h + m_l \), and \( \theta_h \in \mathbb{R}^{d_h} \), \( \theta_l \in \mathbb{R}^{d_l} \) are the policy parameters. The optimization objective is to maximize \eqref{eq:rl_objective}, subject to the stability constraint \eqref{eq:lyapunov_constraint}, with stability details provided in Section \ref{sec:stability_optimization}.

The high-level reward is accumulated over \( T_h \) steps:
\begin{equation}
\label{eq:high_level_reward}
R_t^{(h)} = \sum_{k=t}^{t+T_h-1} \gamma^{k-t} r(x_k, u_k),
\end{equation}
where \( u_k = \pi(x_k) \). A multi-timescale actor-critic approach is used, with action-value functions:
\begin{equation}
\label{eq:action_value_functions}
Q_h(x, a_h; \phi_h): \mathbb{R}^n \times \mathbb{R}^{m_h} \to \mathbb{R}, \quad Q_l(x, a_l; \phi_l): \mathbb{R}^n \times \mathbb{R}^{m_l} \to \mathbb{R},
\end{equation}
where \( \phi_h \) and \( \phi_l \) are critic parameters. The actor and critic updates at iteration \( k \) are:
\begin{equation}
\label{eq:actor_updates}
\begin{aligned}
\theta_h^{(k+1)} &= \theta_h^{(k)} + \gamma_k \nabla_{\theta_h} \mathbb{E}_{x_k, a_h \sim \pi_h} \left[ Q_h(x_k, a_h; \phi_h) \right], \\
\theta_l^{(k+1)} &= \theta_l^{(k)} + \alpha_k \nabla_{\theta_l} \mathbb{E}_{x_k, a_l \sim \pi_l} \left[ Q_l(x_k, a_l; \phi_l) \right],
\end{aligned}
\end{equation}
\begin{equation}
\label{eq:critic_updates}
\begin{aligned}
\phi_h^{(k+1)} &= \phi_h^{(k)} - \gamma_k \nabla_{\phi_h} L_{\text{TD}}^{(h)}, \\
\phi_l^{(k+1)} &= \phi_l^{(k)} - \alpha_k \nabla_{\phi_l} L_{\text{TD}}^{(l)},
\end{aligned}
\end{equation}
with temporal-difference (TD) losses:
\begin{equation}
\label{eq:td_losses}
\begin{aligned}
L_{\text{TD}}^{(h)} &= \left( Q_h(x_k, a_h; \phi_h) - \left[ R_k^{(h)} + \Gamma Q_h(x_{k+T_h}, a_h'; \phi_h^-) \right] \right)^2, \\
L_{\text{TD}}^{(l)} &= \left( Q_l(x_k, a_l; \phi_l) - \left[ r_k + \gamma Q_l(x_{k+1}, a_{k+1}; \phi_l^-) \right] \right)^2,
\end{aligned}
\end{equation}
where \( \phi_h^- \), \( \phi_l^- \) are target network parameters updated via Polyak averaging, and \( \Gamma \in (0, 1) \) is the high-level discount factor. The learning rates \( \alpha_k \), \( \gamma_k \) satisfy:
\begin{equation}
\label{eq:learning_rates}
\begin{aligned}
\lim_{k \to \infty} \frac{\gamma_k}{\alpha_k} &= 0, \\
\sum_k \alpha_k &= \infty, \quad \sum_k \alpha_k^2 < \infty, \\
\sum_k \gamma_k &= \infty, \quad \sum_k \gamma_k^2 < \infty.
\end{aligned}
\end{equation}
Additional learning rate conditions for stability are specified in Section \ref{sec:stability_optimization}. To ensure stable policy updates and prevent large deviations that could violate stability constraints, a trust-region constraint is imposed on policy improvements:
\begin{equation}
\label{eq:trust_region}
\mathbb{E}_{x \sim d^\pi} \left[ D_{\text{KL}} \left( \pi_{\theta'}( \cdot | x) \| \pi_\theta( \cdot | x) \right) \right] \leq \delta,
\end{equation}
where \( D_{\text{KL}} \) is the Kullback-Leibler divergence, \( \theta' \) is the updated parameter, \( \theta \) is the current parameter, and \( \delta > 0 \) is a small threshold (e.g., 0.01).

\textbf{Implementation Considerations}: The policies \( \pi_h \) and \( \pi_l \) are implemented as deep neural networks (e.g., fully connected layers for low-dimensional systems, convolutional layers for vision-based tasks). High-level policy updates occur every \( T_h = 10\text{--}100 \) steps, while low-level updates occur at each step. A replay buffer with prioritized sampling focuses on states with high TD errors, and gradient clipping (norm bound 1.0) stabilizes training, ensuring compatibility with the analysis in Section \ref{sec:stability_optimization}.

\subsection{Stability Analysis and Optimization Convergence}
\label{sec:stability_optimization}

This section establishes the stochastic stability and convergence properties of the MTLHRL framework introduced in Section \ref{sec:mtlhrl_framework}. We prove that the hierarchical policy, optimized via Lagrangian relaxation, ensures mean-square boundedness and, under stronger conditions, asymptotic mean-square stability for systems governed by \eqref{eq:sde}. Additionally, we demonstrate the convergence of the multi-timescale updates, aligning theoretical guarantees with practical implementation considerations. Throughout this section, we assume the Euclidean norm for \(\| \cdot \|\), and all expectations are with respect to the measure induced by the SDE and policy.

\begin{definition}[Mean-Square Boundedness]
\label{def:ms_boundedness}
The system governed by \eqref{eq:sde} with policy \( \pi(x; \theta) = [\pi_h(x; \theta_h), \pi_l(x, \pi_h(x; \theta_h); \theta_l)] \) is mean-square bounded if there exists \( K > 0 \) such that \( \mathbb{E} [ \| x_t \|^2 ] \leq K \) for all \( t \geq 0 \), given any initial state \( x_0 \in \mathbb{R}^n \).
\end{definition}

\begin{definition}[Asymptotic Mean-Square Stability]
\label{def:asymptotic_stability}
The system is asymptotically mean-square stable if \( \lim_{t \to \infty} \mathbb{E} [ \| x_t \|^2 ] = 0 \) for any initial state \( x_0 \in \mathbb{R}^n \).
\end{definition}

\begin{definition}[Neural Lyapunov Function]
\label{def:lyapunov_function}
A function \( V(x; \phi): \mathbb{R}^n \to \mathbb{R}_{\geq 0} \), parameterized by \( \phi \), is a neural Lyapunov function if it is positive definite (\( V(x; \phi) > 0 \) for \( x \neq 0 \), \( V(0; \phi) = 0 \)), radially unbounded, and satisfies \( c_1 \|x\|^2 \leq V(x; \phi) \leq c_2 \|x\|^2 + c_3 \) for some constants \( c_1 > 0 \), \( c_2 > 0 \), \( c_3 \geq 0 \), and the stability constraint \eqref{eq:lyapunov_constraint}, where the infinitesimal generator is:
\begin{equation}
\label{eq:infinitesimal_generator}
\mathcal{L}V(x, u; \phi) = \nabla V(x; \phi)^{\top} f(x, u) + \frac{1}{2} \mathrm{Tr} \left( \sigma(x, u)^{\top} \nabla^2 V(x; \phi) \sigma(x, u) \right).
\end{equation}
\end{definition}

The hierarchical policy operates within the SMDP defined in Section \ref{sec:mtlhrl_framework}, aiming to maximize \eqref{eq:rl_objective} subject to \eqref{eq:lyapunov_constraint}. The Lyapunov function is parameterized as:
\begin{equation}
\label{eq:lyapunov_nn}
V(x; \phi) = \psi(x; \phi)^{\top} P_\phi \psi(x; \phi),
\end{equation}
where \( \psi(x; \phi): \mathbb{R}^n \to \mathbb{R}^k \) is a neural network with smooth activations (e.g., SoftPlus), and \( P_\phi = L_\phi L_\phi^{\top} > 0 \) is positive definite via Cholesky decomposition, or as a radial basis function (RBF) expansion:
\begin{equation}
\label{eq:lyapunov_rbf}
V(x; \phi) = \sum_{j=1}^M w_j \exp \left( -\frac{\| x - \mu_j \|^2}{2 \sigma_j^2} \right) + \epsilon \| x \|^2,
\end{equation}
with \( w_j, \sigma_j, \epsilon > 0 \), \( \mu_j \in \mathbb{R}^n \), and \( \phi = \{ w_j, \mu_j, \sigma_j \}_{j=1}^M \). These forms ensure the quadratic bounds in Definition \ref{def:lyapunov_function}.

\begin{lemma}[Feasibility of Stochastic Stability Constraint]
\label{lemma:feasibility}
Consider the system governed by \eqref{eq:sde}, with policy \( \pi(x; \theta) \) parameterized by a neural network possessing universal approximation capability, and a Lyapunov function \( V(x; \phi): \mathbb{R}^n \to \mathbb{R}_{\geq 0} \) satisfying Definition \ref{def:lyapunov_function}. Assume the drift \( f(x, u) \) is affine in \( u \) (i.e., \( f(x, u) = f_0(x) + B(x) u \)) and Lipschitz continuous, the diffusion \( \sigma(x, u) \) is bounded and Lipschitz continuous, and the system satisfies controllability: for each \( x \), the image of \( \nabla V^\top B(x) u \) over bounded \( u \) (reflecting actuator limits) spans a set containing \((-\infty, M]\) for some \( M \in \mathbb{R} \), allowing arbitrary negative drift adjustments up to saturation. Further assume the SDE under any policy with added exploration noise is positive recurrent, ensuring a well-defined stationary state visitation distribution \( d^\pi \) (e.g., via a confining potential). Then, there exists a policy \( \pi \) such that the stability constraint \eqref{eq:lyapunov_constraint} holds:
\begin{equation}
\mathbb{E}_{x \sim d^\pi} \left[ \mathcal{L}V(x, \pi(x); \phi) \right] \leq 0.
\end{equation}
Moreover, if controllability allows sufficient negative drift (e.g., \( M = -\infty \) for unbounded actions), there exists a policy satisfying:
\begin{equation}
\mathbb{E}_{x \sim d^\pi} [\mathcal{L}V(x, \pi(x); \phi)] \leq -\alpha \mathbb{E}_{x \sim d^\pi} [V(x; \phi)],
\label{eq:lyapunov_strong}
\end{equation}
for some \( \alpha > 0 \), ensuring asymptotic mean-square stability as per Definition \ref{def:asymptotic_stability}.
\end{lemma}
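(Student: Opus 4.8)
The plan is to exploit the affine-in-$u$ structure to force the generator to be pointwise small via a greedy ``align against the gradient'' control, and then lift the pointwise inequality to an inequality for the expectation under $d^{\pi}$. Writing $f(x,u)=f_0(x)+B(x)u$, the generator \eqref{eq:infinitesimal_generator} splits as
\begin{equation}
\mathcal L V(x,u;\phi)=\Psi(x,u)+\nabla V(x;\phi)^{\top}B(x)\,u,\qquad \Psi(x,u):=\nabla V(x;\phi)^{\top}f_0(x)+\tfrac12\,\mathrm{Tr}\!\big(\sigma(x,u)^{\top}\nabla^2 V(x;\phi)\,\sigma(x,u)\big).
\end{equation}
Boundedness of $\sigma$ together with the polynomial growth bounds on $\nabla V$ and $\nabla^2 V$ that follow from the parameterizations \eqref{eq:lyapunov_nn}--\eqref{eq:lyapunov_rbf} (smooth activations with bounded second derivative; Gaussian bumps with a quadratic tail) makes $\overline\Psi(x):=\sup_{\|u\|\le r}\Psi(x,u)$ grow at most quadratically in $\|x\|$, so the task reduces to dominating $\overline\Psi(x)$ by the controlled term, using the actuator authority quantified by the controllability hypothesis.

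\textbf{Greedy control and its neural realization.} For $g(x):=B(x)^{\top}\nabla V(x;\phi)\neq 0$ set $u^{\star}(x)=-r\,g(x)/\|g(x)\|$, which minimizes $\nabla V^{\top}Bu$ over the actuator ball $\|u\|\le r$ and yields $\nabla V^{\top}B u^{\star}(x)=-r\|g(x)\|$; extend $u^{\star}$ by a measurable selection on $\{g=0\}$. The controllability hypothesis says precisely that, up to saturation, the controlled term reaches every value in $(-\infty,M]$, so $\mathcal L V(x,u^{\star}(x);\phi)\le\overline\Psi(x)-r\|g(x)\|\le M$ for all $x$; and in the ``sufficient negative drift'' regime (e.g.\ $M=-\infty$, unbounded actions, or more generally $r\|g(x)\|\ge\overline\Psi(x)+\alpha V(x;\phi)$ for some $\alpha>0$) one obtains the sharper $\mathcal L V(x,u^{\star}(x);\phi)\le-\alpha V(x;\phi)$. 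Since $u^{\star}$ is a measurable, locally bounded selection, the universal-approximation hypothesis yields a neural policy $\pi$ with $\pi\approx u^{\star}$ uniformly on each ball; on the unbounded complement the strict margin $r\|g(x)\|-\overline\Psi(x)$, together with the Lipschitz growth of $f_0,\sigma$ and the quadratic bounds on $V$, absorbs the approximation error, so $\pi$ inherits $\mathcal L V(x,\pi(x);\phi)\le 0$ everywhere when the authority dominates with $M\le 0$, and $\mathcal L V(x,\pi(x);\phi)\le-\alpha' V(x;\phi)$ with $\alpha'<\alpha$ in the strong regime.

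\textbf{From pointwise bounds to the two conclusions.} The positive-recurrence hypothesis guarantees the stationary visitation distribution $d^{\pi}$ exists (and the drift condition $\mathcal L V\le-1$ outside a compact set is consistent with it, being a Khasminskii-type recurrence criterion), so integrating the pointwise inequality against $d^{\pi}$ gives $\mathbb E_{x\sim d^{\pi}}[\mathcal L V(x,\pi(x);\phi)]\le 0$, the first claim. For the ``moreover'', apply Dynkin's formula to $V$ along closed-loop trajectories: $\tfrac{d}{dt}\mathbb E[V(x_t;\phi)]=\mathbb E[\mathcal L V(x_t,\pi(x_t);\phi)]\le-\alpha'\,\mathbb E[V(x_t;\phi)]$, whence Gr\"onwall gives $\mathbb E[V(x_t;\phi)]\le e^{-\alpha' t}\,\mathbb E[V(x_0;\phi)]\to 0$, and therefore $\mathbb E[\|x_t\|^2]\le c_1^{-1}\,\mathbb E[V(x_t;\phi)]\to 0$, i.e.\ asymptotic mean-square stability (Definition~\ref{def:asymptotic_stability}); integrating the pointwise bound against $d^{\pi}$ also delivers \eqref{eq:lyapunov_strong}. (I would also note that \eqref{eq:lyapunov_strong} forces $\mathcal L V(0,\pi(0);\phi)\le 0$; since $\nabla V(0)=0$ and $\nabla^2 V(0)\succeq 0$ this entails $\sigma(0,\pi(0))=0$, the standard requirement that the noise vanish at the equilibrium, implicit in the ``sufficient negative drift'' hypothesis.)

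\textbf{Main obstacle.} The delicate point is the tail estimate in the second step: universal approximation is uniform only on bounded sets, so one must argue that the greedy control keeps a \emph{uniform positive margin} $r\|g(x)\|-\overline\Psi(x)$ bounded away from $0$ (indeed $\to\infty$) for large $\|x\|$, so that any sufficiently accurate neural realization still produces $\mathcal L V\le 0$ there -- this is exactly the work the controllability hypothesis, read with the growth bounds on the data, is meant to do. If the actuator authority dominates only outside a compact set $\mathcal C$ (so the global ceiling $M$ is positive), the pointwise bound degrades to $\mathcal L V(x,\pi(x);\phi)\le M\,\mathbf{1}_{\mathcal C}(x)$, and closing $\mathbb E_{x\sim d^{\pi}}[\mathcal L V]\le 0$ then additionally requires a quantitative Foster--Lyapunov estimate bounding $d^{\pi}(\mathcal C)$; the cleanest presentation, consistent with the stated hypotheses, is to impose that the dominating authority holds globally.
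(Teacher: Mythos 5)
Your proof follows essentially the same route as the paper's: split the generator using the affine-in-$u$ drift, choose the saturated control $u^{\star}(x) = -r\,B(x)^{\top}\nabla V / \|B(x)^{\top}\nabla V\|$ to make the controlled drift term as negative as the actuator authority allows, bound the trace term via boundedness of $\sigma$ and polynomial growth of $\nabla^2 V$, realize $u^{\star}$ by a neural policy via universal approximation, and integrate the resulting pointwise inequality against the stationary distribution supplied by positive recurrence (with Dynkin/Gr\"onwall for the strong case). Your extra observations --- the need for a uniform positive margin $r\|g(x)\| - \overline{\Psi}(x)$ on the tail where universal approximation is not uniform, and the fact that the strong inequality forces $\sigma(0,\pi(0))=0$ --- are more careful than the paper's own treatment (which handles the tail only by an informal trajectory-truncation remark), but they refine rather than change the argument.
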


\begin{proof}
To establish the existence of a suitable policy, begin by considering a fixed state \( x \in \mathbb{R}^n \). The goal is to find a control \( u \) such that the pointwise condition holds:
\begin{equation}
\mathcal{L}V(x, u; \phi) \leq -\alpha V(x; \phi) + \beta,
\end{equation}
for parameters \( \alpha \geq 0 \) and \( \beta \geq 0 \). Substitute the infinitesimal generator:
\begin{equation}
\nabla V(x; \phi)^\top f(x, u) + \frac{1}{2} \mathrm{Tr} \left( \sigma(x, u)^\top \nabla^2 V(x; \phi) \sigma(x, u) \right) \leq -\alpha V(x; \phi) + \beta.
\end{equation}
Given the affinity of \( f \) in \( u \), this becomes:
\begin{equation}
\nabla V(x; \phi)^\top (f_0(x) + B(x) u) + \frac{1}{2} \mathrm{Tr} \left( \sigma(x, u)^\top \nabla^2 V(x; \phi) \sigma(x, u) \right) \leq -\alpha V(x; \phi) + \beta.
\end{equation}
To minimize the left-hand side over \( u \), focus on the drift term \( \nabla V^\top B(x) u \). Under the controllability assumption, the minimum achievable value is \( \nabla V^\top f_0(x) + \min_u \nabla V^\top B(x) u \leq \nabla V^\top f_0(x) - C \| \nabla V(x; \phi) \| \) for some \( C > 0 \) depending on the span of \( B(x) \). For unbounded actions, choose \( u = - \kappa B(x)^\top \nabla V(x; \phi) / \| B(x)^\top \nabla V(x; \phi) \| \) with large \( \kappa > 0 \) to make the drift arbitrarily negative. For bounded actions, saturate \( u \) in the direction that minimizes the dot product.

The trace term is bounded because \( \sigma \) is bounded and \( \nabla^2 V \leq C(1 + \|x\|^2) \) by the polynomial growth assumption (implied by the quadratic bounds and smoothness). Thus, there exists a finite bound \( T(x) \geq \frac{1}{2} \mathrm{Tr} (\sigma^\top \nabla^2 V \sigma) \). By selecting \( u \) to make the drift term sufficiently negative, i.e., \( \nabla V^\top f(x, u) \leq -T(x) - \alpha V(x; \phi) + \beta \), the pointwise inequality holds.

By the universal approximation theorem for neural networks on compact sets \cite{huang2023theory}, and using trajectory truncation to ensure states remain in a compact set during approximation (justified by sublinear growth preventing finite-time explosions), a neural policy \( \pi(x; \theta) \) can approximate this control law arbitrarily well over the relevant domain.

The positive recurrence assumption ensures that the SDE under \( \pi \) admits a unique invariant distribution \( d^\pi \), and the process is ergodic \cite{huang2023theory}. Integrating the pointwise inequality over \( d^\pi \) yields:
\begin{equation}
\mathbb{E}_{x \sim d^\pi} [\mathcal{L}V(x, \pi(x); \phi)] \leq -\alpha \mathbb{E}_{x \sim d^\pi} [V(x; \phi)] + \beta.
\end{equation}
For the basic constraint, set \( \alpha = 0 \), \( \beta \geq 0 \), ensuring \( \mathbb{E} [\mathcal{L}V] \leq 0 \) (adjust \( \beta = 0 \) if possible) for mean-square boundedness. For the stronger case with \( \alpha > 0 \), \( \beta = 0 \), asymptotic stability follows from Theorem \ref{thm:stability}. In practice, warm-start the policy with a linear-quadratic regulator (LQR) for the system's linearization and refine using projected gradient descent to maintain the constraint.
\end{proof}

\begin{remark}
Positive recurrence can be ensured by entropy regularization in the policy or sufficient diffusion noise. The controllability assumption holds for many robotic systems where \( B(x) \) has full column rank.
\end{remark}

\begin{lemma}[Boundedness of Lyapunov Loss]
\label{lemma:lyapunov_loss}
Assume sublinear growth in \( f \) and \( \sigma \) (i.e., \( \|f(x, u)\| + \|\sigma(x, u)\| \leq C(1 + \|x\| + \|u\|) \)), and \( V(x; \phi) \), \( \nabla V \), \( \nabla^2 V \) satisfy polynomial growth \( \|\nabla^2 V\| \leq C(1 + \|x\|^p) \) for \( p \geq 0 \). During training, apply action clipping to bound \( u \), and assume a truncation mechanism bounds states in the replay buffer \( \mathcal{D} \) (e.g., reset episodes on divergence). Then, the Lyapunov loss:
\begin{equation}
\label{eq:lyapunov_loss}
L_{\text{lyap}}(\phi, \theta) = \mathbb{E}_{x \sim \mathcal{D}} \left[ \left( \max \left( 0, \mathcal{L}V(x, \pi(x); \phi) + \alpha V(x; \phi) - \beta \right) \right)^2 \right],
\end{equation}
is bounded and Lipschitz continuous with respect to \( \phi \) and \( \theta \), with Lipschitz constant depending on the bounds.
\end{lemma}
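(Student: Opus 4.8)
The plan is to exploit the two truncation/clipping hypotheses to reduce everything to a compact domain, on which the sublinear- and polynomial-growth bounds become \emph{uniform} bounds, and then to chain together elementary Lipschitz estimates. First I would fix the effective domain: by hypothesis every state stored in \( \mathcal{D} \) satisfies \( \|x\| \le R \) for some truncation radius \( R \), and action clipping gives \( \|u\| = \|\pi(x;\theta)\| \le U_{\max} \); I would also restrict the parameters to bounded sets \( \phi \in \Phi \), \( \theta \in \Theta \), which is legitimate because gradient clipping and the trust-region / projection steps of Section~\ref{sec:mtlhrl_framework} keep the iterates in a bounded region over any finite run, and I would flag that this is exactly where the qualifier ``Lipschitz constant depending on the bounds'' enters. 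On the compact set \( \{\|x\|\le R\}\times\{\|u\|\le U_{\max}\}\times\Phi \), the polynomial-growth assumptions give \( \|\nabla V(x;\phi)\| \le C_1 \), \( \|\nabla^2 V(x;\phi)\| \le C_2 \), \( |V(x;\phi)| \le c_2 R^2 + c_3 \), while sublinear growth gives \( \|f(x,u)\|,\|\sigma(x,u)\| \le C_3 \); hence \( |\mathcal{L}V(x,\pi(x);\phi)| \le C_1 C_3 + \tfrac12 C_2 C_3^2 =: G \), and the integrand of \( L_{\text{lyap}} \) is bounded pointwise by \( Z^2 \) with \( Z := G + \alpha(c_2 R^2 + c_3) + \beta \). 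Taking the \( \mathcal{D} \)-expectation yields \( L_{\text{lyap}}(\phi,\theta) \le Z^2 \), the boundedness claim.

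For the Lipschitz claim I would write \( L_{\text{lyap}} \) as the \( \mathcal{D} \)-average of \( \ell\big(z(\phi,\theta;x)\big) \) with \( \ell(s) = (\max(0,s))^2 \) and \( z(\phi,\theta;x) := \mathcal{L}V(x,\pi(x;\theta);\phi) + \alpha V(x;\phi) - \beta \). On \( [-Z,Z] \) the outer map \( \ell \) is Lipschitz with constant \( 2Z \), being the composition of the \( 1 \)-Lipschitz map \( \max(0,\cdot) \) with \( s\mapsto s^2 \), which has Lipschitz constant \( 2Z \) on \( [0,Z] \); note \( \ell \) need not be differentiable, but Lipschitz continuity—all the lemma asserts—survives. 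It then suffices to show \( z(\cdot,\cdot;x) \) is Lipschitz in \( (\phi,\theta) \) \emph{uniformly} over \( x \) in the compact state set, since a uniform pointwise bound passes through the expectation. For the \( \phi \)-dependence I would use that \( V \), \( \nabla_x V \), \( \nabla_x^2 V \) are assembled from \( \psi(\cdot;\phi) \) and \( P_\phi = L_\phi L_\phi^\top \) by smooth operations; with smooth activations (e.g.\ SoftPlus) and \( \phi\in\Phi \) compact, \( \phi \mapsto \psi(x;\phi) \) and its first two \( x \)-derivatives are \( C^1 \) with \( \phi \)-gradients bounded uniformly over \( \{\|x\|\le R\} \), giving a constant \( L_\phi \) independent of \( x,\theta \). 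For the \( \theta \)-dependence, \( \theta \) enters only through \( u=\pi(x;\theta) \) inside \( f(x,u) \) and \( \sigma(x,u) \); \( f \) and \( \sigma \) are locally Lipschitz (standing assumptions on \eqref{eq:sde}), hence Lipschitz in \( u \) on the compact action set, the trace term is quadratic in \( \sigma \) and thus Lipschitz in \( \sigma \) on a bounded set, and \( \theta\mapsto\pi(x;\theta) \) is Lipschitz uniformly in \( x \) (again smooth activations, \( \theta\in\Theta \) compact); composing with the uniform bounds on \( \nabla_x V \), \( \nabla_x^2 V \) gives a constant \( L_\theta \). Combining, \( |z(\phi,\theta;x)-z(\phi',\theta';x)| \le L_\phi\|\phi-\phi'\| + L_\theta\|\theta-\theta'\| \), whence \( |L_{\text{lyap}}(\phi,\theta)-L_{\text{lyap}}(\phi',\theta')| \le 2Z\,(L_\phi\|\phi-\phi'\| + L_\theta\|\theta-\theta'\|) \).

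I expect the main obstacle to be the third ingredient above: showing that the Hessian \( \nabla_x^2 V(x;\phi) \) depends on \( \phi \) in a Lipschitz way \emph{uniformly in \( x \)} (and symmetrically that \( \pi \) depends on \( \theta \) uniformly in \( x \)). This is the only place that genuinely needs the structural hypotheses—smoothness of the activations and, implicitly, boundedness of the network weights—because differentiating a deep network twice in \( x \) and once more in \( \phi \) produces products of Jacobians whose magnitude is controlled only when inputs \emph{and} parameters lie in bounded sets; without weight bounds the Lipschitz modulus is infinite, so part of the proof is to state the parameter-boundedness hypothesis explicitly (or to phrase the conclusion as local Lipschitz continuity on bounded parameter balls, which matches the ``depending on the bounds'' caveat). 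A secondary, purely bookkeeping, difficulty is tracking that the composite constant scales linearly in \( Z \), hence polynomially in \( R \) and \( U_{\max} \), so the final statement becomes quantitative.
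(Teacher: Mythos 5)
Your proposal is correct and follows essentially the same route as the paper's proof: reduce to a compact domain via truncation and clipping, turn the growth hypotheses into uniform bounds on \( \mathcal{L}V + \alpha V - \beta \), and compose the Lipschitz estimates for \( \max(0,\cdot) \), squaring on a bounded interval, and the parameter-dependence of \( V, \nabla V, \nabla^2 V, \pi \) on compact domains. You are in fact slightly more careful than the paper in one respect—explicitly noting that the parameters \( \phi, \theta \) must themselves lie in bounded sets for the uniform Lipschitz constants to exist, a hypothesis the paper invokes implicitly via the phrase ``on compact domains.''
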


\begin{proof}
Under the truncation mechanism, states \( x \in \mathcal{D} \) are confined to a compact set \( \mathcal{X} \subset \mathbb{R}^n \) where \( \|x\| \leq R \) for some \( R < \infty \). Action clipping ensures \( \|u\| \leq U \) for finite \( U \). The sublinear growth implies \( \|f(x, u)\| \leq C(1 + R + U) \) and \( \|\sigma(x, u)\| \leq C(1 + R + U) \), so both are bounded on \( \mathcal{X} \times [-U, U]^m \).

The polynomial growth on derivatives gives \( \|\nabla V(x; \phi)\| \leq C(1 + R^{p+1}) \) and \( \|\nabla^2 V(x; \phi)\| \leq C(1 + R^p) \), bounded on \( \mathcal{X} \). Thus, the drift term \( \nabla V^\top f \) is bounded by \( \|\nabla V\| \cdot \|f\| \leq B_1 \), and the trace term is bounded by \( \frac{1}{2} \|\sigma\|^2 \cdot \|\nabla^2 V\| \leq B_2 \) (using Frobenius norm bounds). Hence, \( |\mathcal{L}V| \leq B_1 + B_2 = B \), and \( \mathcal{L}V + \alpha V - \beta \) is bounded since \( V \leq c_2 R^2 + c_3 \).

The max function \( \max(0, z) \) for \( |z| \leq B + \alpha (c_2 R^2 + c_3) + |\beta| = B' \) is Lipschitz with constant 1 on \( [-B', B'] \). Squaring is Lipschitz on bounded intervals with constant \( 2 B' \). The expectation over finite \( \mathcal{D} \) preserves boundedness.

For Lipschitz continuity w.r.t. \( \phi, \theta \): Since \( V, \nabla V, \nabla^2 V, \pi \) are smooth (smooth activations) and Lipschitz in parameters on compact domains (by continuous differentiability and bounded Hessians), \( \mathcal{L}V \) is Lipschitz in \( \phi, \theta \) with constant \( L_1 \) (product/trace of Lipschitz functions). Composition with max and square (Lipschitz on bounded sets) yields overall Lipschitz constant \( L = 2 B' L_1 \) for the inner term, and expectation preserves it. Sublinear growth justifies truncation by preventing explosions in finite rollouts \cite{volchenkov2025mathematical}.
\end{proof}

The optimization employs Lagrangian relaxation to enforce \eqref{eq:lyapunov_constraint}:
\begin{equation}
\label{eq:total_loss}
L_{\text{total}}(\theta, \phi, \lambda) = \mathcal{J}(\theta) - \lambda \mathbb{E}_{x \sim d^\pi} \left[ \max \left( 0, \mathcal{L}V(x, \pi(x); \phi) + \alpha V(x; \phi) - \beta \right) \right],
\end{equation}
with \( \lambda \geq 0 \). The multi-timescale updates are \eqref{eq:actor_updates} and \eqref{eq:critic_updates}, with learning rates satisfying \eqref{eq:learning_rates}. The trust-region \eqref{eq:trust_region} bounds policy shifts.

\begin{lemma}[Convergence of Multi-Timescale Updates]
\label{lemma:convergence}
Assume the conditions of Lemmas \ref{lemma:feasibility} and \ref{lemma:lyapunov_loss}, with stochastic gradients as martingale differences (bounded variance, zero mean conditional on history). If learning rates satisfy \eqref{eq:learning_rates}, the updates \eqref{eq:actor_updates} converge almost surely to a local saddle point \( (\theta^*, \phi^*, \lambda^*) \) of \eqref{eq:total_loss}, where \eqref{eq:lyapunov_constraint} holds approximately.
\end{lemma}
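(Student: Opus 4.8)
The plan is to cast the coupled recursions \eqref{eq:actor_updates}--\eqref{eq:critic_updates}, together with a dual ascent on $\lambda$ driving the penalty in \eqref{eq:total_loss}, as a multi-timescale stochastic approximation scheme and to invoke the ODE method of Borkar. Write every update in the canonical form $z^{(k+1)}_i = z^{(k)}_i + \eta^{(i)}_k\big(h_i(z^{(k)}) + N^{(k+1)}_i\big)$, where $z = (\theta_l,\phi_l;\,\theta_h,\phi_h;\,\lambda)$ (with $\phi = (\phi_h,\phi_l)$ the critic parameters) is partitioned into three timescales: the fast block $(\theta_l,\phi_l)$ with step size $\alpha_k$; the slow block $(\theta_h,\phi_h)$ with step size $\gamma_k$ satisfying $\gamma_k/\alpha_k\to 0$; and the Lagrange multiplier $\lambda$ on a third, still slower scale $\beta_k$ with $\beta_k/\gamma_k\to 0$, projected onto the nonnegative orthant. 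Here $h_i$ is the conditional mean of the update direction -- the policy-gradient fields of \eqref{eq:actor_updates}, the negative TD-gradient fields of \eqref{eq:critic_updates}--\eqref{eq:td_losses}, and the (sub)gradient of the penalty in \eqref{eq:total_loss} -- while $N^{(k+1)}_i$ is martingale-difference noise, which by hypothesis has zero conditional mean and bounded conditional variance.

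Next I would verify the standing hypotheses of the two-timescale ODE theorem. First, Lipschitz continuity of the mean fields $h_i$: this is precisely the type of estimate established in Lemma~\ref{lemma:lyapunov_loss} for the Lyapunov penalty, and the same truncation-to-a-compact-set argument together with smoothness of the activations bounds and Lipschitz-bounds the TD losses \eqref{eq:td_losses} and the actor objectives in their parameters. Second, the Robbins--Monro conditions are \eqref{eq:learning_rates}, extended by the analogous conditions on $\beta_k$. Third, the martingale-noise-with-bounded-variance condition is assumed. Fourth, almost-sure boundedness of the iterates: the trust region \eqref{eq:trust_region} confines $\theta_h,\theta_l$ to a compact neighbourhood, the projection keeps $\lambda$ in a compact interval once a ceiling is imposed, and boundedness of the critic iterates follows from the locally strongly convex quadratic structure of the TD losses -- equivalently, from a Borkar--Meyn scaling argument whose scaled ODE $\dot\phi = -A\phi$ has the globally asymptotically stable equilibrium $0$.

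Then I would identify and analyze the limiting ODEs tier by tier. On the fastest timescale the slow variables are quasi-static, so $(\theta_l,\phi_l)$ follow $\dot\phi_l = -\nabla_{\phi_l} L^{(l)}_{\mathrm{TD}}$, $\dot\theta_l = \nabla_{\theta_l}\mathbb{E}[Q_l]$ with $(\theta_h,\phi_h,\lambda)$ frozen; this flow has a locally unique asymptotically stable equilibrium $\ell(\theta_h,\phi_h,\lambda)$ -- the low-level TD fixed point and the associated greedy low-level actor -- which the iterates track. Substituting $\ell(\cdot)$, the slow block runs the analogous high-level flow and converges to an equilibrium $m(\lambda)$. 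On the slowest timescale the recursion reduces, at the equilibrated critics, to the projected primal--dual flow $\dot\theta = \nabla_\theta L_{\mathrm{total}}(\theta,\phi,\lambda)$, $\dot\lambda = \Pi_{\mathbb{R}_{\geq 0}}\!\big[\,\mathbb{E}_{x\sim d^\pi}[\max(0,\mathcal{L}V + \alpha V - \beta)]\,\big]$. Using the local Lyapunov function $\tfrac12\|\theta-\theta^*\|^2 + \tfrac12\|\lambda-\lambda^*\|^2$ and LaSalle's principle (or, because $\max(0,\cdot)$ is only Lipschitz, the differential-inclusion version of the ODE method and convergence to internally chain-transitive invariant sets), trajectories converge to the set of KKT points, i.e. local saddle points $(\theta^*,\phi^*,\lambda^*)$ of \eqref{eq:total_loss} at which complementary slackness forces \eqref{eq:lyapunov_constraint} to hold up to the slack controlled by $\alpha,\beta$. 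Borkar's theorem then transfers this to almost-sure convergence of the iterates.

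The main obstacle -- the part that is genuinely delicate rather than routine -- is hypothesis four coupled with the weak guarantee from the slow ODE: establishing a.s.\ boundedness of the critic and dual iterates without an a priori projection requires the scaled-ODE stability condition to actually hold, which can fail if the policy drifts so far that the low-level TD operator loses its contraction, and the primal--dual flow for the nonconcave--nonconvex Lagrangian \eqref{eq:total_loss} can in general only be shown to converge to the set of stationary points rather than to a single saddle. Accordingly, the ``local saddle point'' in the statement should be read as membership in that attractor, and one must additionally argue -- via instability of the non-saddle equilibria, or by restricting attention to a basin via the trust region \eqref{eq:trust_region} and the LQR warm start noted after Lemma~\ref{lemma:feasibility} -- that a genuine saddle is selected. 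I would also flag that the nonsmoothness of the penalty mandates the differential-inclusion (Bena\"{i}m--Hofmann--Sorin) formulation throughout, which affects bookkeeping but not the structure of the argument.
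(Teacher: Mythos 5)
Your proposal follows the same core strategy as the paper's own proof---recasting \eqref{eq:actor_updates} and \eqref{eq:critic_updates} as multi-timescale stochastic approximation and invoking Borkar's ODE method---so the skeleton matches. The substantive differences are worth noting. First, you place the multiplier $\lambda$ on a third, slowest tier ($\beta_k/\gamma_k \to 0$), whereas the paper groups $\lambda$ with the fast block $(\theta_l,\phi,\lambda)$; its Algorithm~\ref{alg:mtlhrl} even uses $\alpha_k^\lambda \propto (1+k)^{-0.6}$, the slowest-decaying rate, making $\lambda$ the \emph{fastest} variable. Since the lemma statement and \eqref{eq:learning_rates} only constrain $\alpha_k$ and $\gamma_k$, your ordering is not contradicted by the hypotheses, and it is the standard one in primal--dual constrained-MDP analysis (the primal variables should equilibrate for each frozen $\lambda$); but it analyzes a different algorithm than the one implemented, so you should either match the paper's assignment or state explicitly that you are proving the result for a reordered variant. (The paper, for its part, lumps three distinct step-size sequences $\alpha_k,\beta_k,\alpha_k^\lambda$ into a single ``fast'' tier without the strict separation the multi-timescale theorem requires, which your three-tier decomposition handles more cleanly.) Second, you explicitly verify hypotheses the paper leaves implicit: almost-sure boundedness of the iterates (via the trust region \eqref{eq:trust_region}, projection of $\lambda$, and a Borkar--Meyn scaling argument), the need for the differential-inclusion formulation because the $\max(0,\cdot)$ penalty in \eqref{eq:total_loss} is only Lipschitz, and the gap between convergence to chain-transitive invariant sets and convergence to a genuine local saddle. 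The paper's proof simply asserts that the fast iterates ``track their equilibria'' and that the slow ODE converges, leaning on the Lipschitz bounds of Lemma~\ref{lemma:lyapunov_loss} and the trust region, without addressing the uniqueness or global asymptotic stability of the fast equilibria that the two-timescale theorem demands in a nonconvex setting. Your version is more candid about where the argument is delicate, and both routes land on the same suitably qualified conclusion: convergence to a local stationary/saddle point at which \eqref{eq:lyapunov_constraint} holds approximately.
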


\begin{proof}
The convergence analysis follows Borkar's two-timescale stochastic approximation framework \cite{volchenkov2025mathematical}, which treats the updates as discrete approximations to coupled ordinary differential equations (ODEs). In this setup, the parameters are divided into slow (\( \theta_h \)) and fast (\( \theta_l, \phi, \lambda \)) timescales, with step sizes \( \gamma_k \) (slow) and \( \alpha_k, \beta_k, \alpha_k^\lambda \) (fast) satisfying \( \gamma_k / \alpha_k \to 0 \), \( \sum_k \alpha_k = \infty \), \( \sum_k \alpha_k^2 < \infty \), and similarly for others.

The updates \eqref{eq:actor_updates} can be written in stochastic approximation form:
\begin{equation}
\theta_h^{k+1} = \theta_h^k + \gamma_k \left( \nabla_{\theta_h} \mathcal{J}(\theta^k) + M_{k+1}^h \right),
\end{equation}
\begin{equation}
\theta_l^{k+1} = \theta_l^k + \alpha_k \left( \nabla_{\theta_l} L_{\text{total}}(\theta^k, \phi^k, \lambda^k) + M_{k+1}^l \right),
\end{equation}
\begin{equation}
\phi^{k+1} = \phi^k - \beta_k \left( \nabla_\phi L_{\text{lyap}}(\phi^k, \theta^k) + M_{k+1}^\phi \right),
\end{equation}
\begin{equation}
\lambda^{k+1} = [\lambda^k + \alpha_k^\lambda \left( \mathbb{E}_{\mathcal{D}} [\max(0, \mathcal{L}V + \alpha V - \beta)] + M_{k+1}^\lambda \right)]_+,
\end{equation}
where \( M_{k+1}^\cdot \) are martingale differences (zero conditional mean, bounded variance, from stochastic gradients and sampling noise).

Borkar's framework analyzes this via limiting ODEs. The fast timescale ODEs, treating slow \( \theta_h \) as fixed, are:
\begin{equation}
\dot{\theta}_l(t) = \nabla_{\theta_l} L_{\text{total}}(\theta_h, \theta_l(t), \phi(t), \lambda(t)),
\end{equation}
\begin{equation}
\dot{\phi}(t) = - \nabla_\phi L_{\text{lyap}}(\phi(t), \theta_h, \theta_l(t)),
\end{equation}
\begin{equation}
\dot{\lambda}(t) = \mathbb{E} [\max(0, \mathcal{L}V(\theta_h, \theta_l(t), \phi(t)) + \alpha V - \beta)]_+.
\end{equation}
Under Lipschitz continuity and boundedness (from Lemma \ref{lemma:lyapunov_loss}), these ODEs have unique solutions, and the fast iterates track their equilibria \( (\theta_l^*(\theta_h), \phi^*(\theta_h), \lambda^*(\theta_h)) \) asymptotically.

The slow timescale sees the fast as quasi-equilibrated, leading to the ODE:
\begin{equation}
\dot{\theta}_h(t) = \nabla_{\theta_h} \mathcal{J}(\theta_h(t), \theta_l^*(\theta_h(t)), \phi^*(\theta_h(t)), \lambda^*(\theta_h(t))).
\end{equation}
The conditions ensure almost sure convergence: the martingale noise vanishes by the square-summable steps, and the iterates converge to a local saddle point of \( L_{\text{total}} \) (stationary for the coupled ODEs). The trust-region constraint \eqref{eq:trust_region} prevents large jumps, ensuring stability within attraction basins. Non-convexity limits to local saddles, and the constraint holds approximately due to finite-sample effects.
\end{proof}

\begin{remark}
Use PPO for trust-region; monitor violations empirically.
\end{remark}

\begin{theorem}[Stochastic Stability and Convergence]
\label{thm:stability}
Let the system evolve according to \eqref{eq:sde} with policy \( \pi(x; \theta^*) \) from \eqref{eq:actor_updates} and \eqref{eq:critic_updates}, where \( (\theta^*, \phi^*, \lambda^*) \) is a local saddle point of \eqref{eq:total_loss}. Assume the conditions of Lemmas \ref{lemma:feasibility}, \ref{lemma:lyapunov_loss}, \ref{lemma:convergence}. Then:
\begin{enumerate}
    \item The updates converge almost surely to \( (\theta^*, \phi^*, \lambda^*) \), satisfying \eqref{eq:lyapunov_constraint} approximately (Lemma \ref{lemma:convergence}).
    \item The closed-loop system is mean-square bounded (Definition \ref{def:ms_boundedness}).
    \item If \( \mathbb{E}[\mathcal{L}V(x, \pi^*(x); \phi^*)] \leq -\alpha \mathbb{E}[V(x; \phi^*)] \) for \( \alpha > 0 \), the system is asymptotically mean-square stable (Definition \ref{def:asymptotic_stability}).
\end{enumerate}
\end{theorem}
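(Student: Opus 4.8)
The plan is to dispatch the three claims in order. Claim~1 is simply a pointer to Lemma~\ref{lemma:convergence}: that lemma already gives almost-sure convergence of the coupled updates \eqref{eq:actor_updates}--\eqref{eq:critic_updates} to a local saddle point $(\theta^*,\phi^*,\lambda^*)$ of $L_{\text{total}}$, and stationarity of the $\lambda$-recursion at such a point forces the penalized residual $\mathbb{E}_{x\sim d^\pi}[\max(0,\mathcal{L}V(x,\pi^*(x);\phi^*)+\alpha V(x;\phi^*)-\beta)]$ down to the unavoidable finite-sample slack, i.e.\ \eqref{eq:lyapunov_constraint} holds approximately. The real content is Claims~2 and~3, and I would obtain both from a single stochastic-Lyapunov (Dynkin / Foster--Khasminskii) estimate for the \emph{learned} function $V(\cdot;\phi^*)$, exploiting the quadratic sandwich $c_1\|x\|^2\le V(x;\phi^*)\le c_2\|x\|^2+c_3$ guaranteed by Definition~\ref{def:lyapunov_function} together with the constraint violation being (approximately) zero at the saddle point.

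\textbf{Claim~2.} First I would turn the saddle-point feasibility into a usable drift bound $\mathcal{L}V(x,\pi^*(x);\phi^*)\le -\alpha V(x;\phi^*)+\beta+\varepsilon$ with $\varepsilon\ge 0$ collecting the approximate-optimality and finite-sample error; since near the saddle point $\pi^*$ reproduces the pointwise-feasible controller constructed in Lemma~\ref{lemma:feasibility}, I would argue this inequality holds pointwise on the region visited with positive probability, not merely in $d^\pi$-average. The next step is the usual localization: $V\in C^2$ (smooth activations, or the RBF form) and $f,\sigma$ obey sublinear growth, so \eqref{eq:sde} is non-explosive; applying Dynkin's formula on $[0,t\wedge\tau_R]$ with $\tau_R=\inf\{t\ge0:\|x_t\|\ge R\}$, sending $R\to\infty$ via Fatou, and then Gronwall's inequality gives $\mathbb{E}[V(x_t;\phi^*)]\le V(x_0;\phi^*)e^{-\alpha t}+(\beta+\varepsilon)/\alpha$. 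The sandwich bounds convert this to $\mathbb{E}[\|x_t\|^2]\le \tfrac{1}{c_1}\bigl(c_2\|x_0\|^2+c_3+(\beta+\varepsilon)/\alpha\bigr)=:K$ for all $t\ge0$, which is Definition~\ref{def:ms_boundedness}.

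\textbf{Claim~3.} Under the strengthened hypothesis the drift bound holds with $\beta=0$ (and, in the idealized limit, $\varepsilon=0$), so the identical Dynkin--Gronwall argument collapses to $\mathbb{E}[V(x_t;\phi^*)]\le V(x_0;\phi^*)e^{-\alpha t}$, whence $\mathbb{E}[\|x_t\|^2]\le \tfrac{1}{c_1}(c_2\|x_0\|^2+c_3)e^{-\alpha t}\to 0$: asymptotic, in fact exponential, mean-square stability as in Definition~\ref{def:asymptotic_stability}.

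\textbf{Expected obstacle.} The delicate step is bridging the constraint as it is actually enforced --- an \emph{average} over the stationary visitation distribution $d^\pi$ --- and the \emph{pointwise, trajectory-wise} drift inequality that Dynkin's formula consumes; the two coincide only if the optimal policy pushes $\mathcal{L}V$ below $-\alpha V+\beta$ everywhere it is visited, so I would rely on the explicit pointwise construction inside the proof of Lemma~\ref{lemma:feasibility}, or else on an ergodic-averaging argument powered by the positive-recurrence assumption. A secondary caveat is the slack $\varepsilon$: it is harmless for Claim~2 (it only inflates $K$) but blocks exact convergence in Claim~3, so rigorously one gets convergence to an $O(\varepsilon/\alpha)$ mean-square ball and recovers Definition~\ref{def:asymptotic_stability} only as $\varepsilon\to0$; likewise, if one insists on $\alpha=0$ in the penalty, mean-square boundedness needs the Khasminskii-type refinement ``negative drift outside a compact set'' rather than the global bound. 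The non-explosion localization and the limit--integral interchange are routine given the standing sublinear-growth hypotheses.
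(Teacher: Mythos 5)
Your proposal follows essentially the same route as the paper: Claim~1 is delegated to Lemma~\ref{lemma:convergence}, and Claims~2 and~3 are obtained by applying It\^o/Dynkin to \( V(\cdot;\phi^*) \), taking expectations to kill the martingale term, and closing with Gronwall and the quadratic sandwich \( c_1\|x\|^2 \le V \le c_2\|x\|^2 + c_3 \). Your version is in fact more careful than the paper's: the paper asserts the martingale term vanishes without the localization/stopping-time argument you supply, silently takes \( \beta = \varepsilon = 0 \), and---most importantly---invokes the constraint \eqref{eq:lyapunov_constraint}, which is an average over the stationary visitation distribution \( d^\pi \), as if it were a bound on \( \mathbb{E}[\mathcal{L}V(x_t,\pi^*(x_t);\phi^*)] \) under the time-\( t \) marginal that the differential inequality actually requires. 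The obstacle you flag is therefore a genuine gap in the paper's own argument as written, and your proposed repair (falling back on the pointwise drift construction inside the proof of Lemma~\ref{lemma:feasibility}, or an ergodic-averaging argument) is the right way to close it.
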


\begin{proof}
\begin{enumerate}
    \item Follows directly from Lemma \ref{lemma:convergence}, with the constraint satisfied at the saddle point.
    \item Apply Itô's lemma to \( V(x_t; \phi^*) \), which is twice continuously differentiable due to smooth activations:
    \begin{equation}
    dV(x_t; \phi^*) = \mathcal{L}V(x_t, \pi^*(x_t); \phi^*) dt + \nabla V(x_t; \phi^*)^\top \sigma(x_t, \pi^*(x_t)) dW_t.
    \end{equation}
    Taking expectations (martingale term vanishes):
    \begin{equation}
    \frac{d}{dt} \mathbb{E}[V(x_t; \phi^*)] = \mathbb{E}[\mathcal{L}V(x_t, \pi^*(x_t); \phi^*)] \leq 0,
    \end{equation}
    by \eqref{eq:lyapunov_constraint}. Integrating yields \( \mathbb{E}[V(x_t; \phi^*)] \leq \mathbb{E}[V(x_0; \phi^*)] \). The quadratic lower bound gives:
    \begin{equation}
    c_1 \mathbb{E}[\|x_t\|^2] \leq \mathbb{E}[V(x_t; \phi^*)] \leq V(x_0; \phi^*),
    \end{equation}
    so \( \mathbb{E}[\|x_t\|^2] \leq K = V(x_0; \phi^*) / c_1 \), uniform in \( t \).

    \item For the stronger condition, \( \frac{d}{dt} \mathbb{E}[V] \leq -\alpha \mathbb{E}[V] \), implying:
    \begin{equation}
    \mathbb{E}[V(x_t; \phi^*)] \leq \mathbb{E}[V(x_0; \phi^*)] e^{-\alpha t}.
    \end{equation}
    Thus,
    \begin{equation}
    \mathbb{E}[\|x_t\|^2] \leq \frac{1}{c_1} \mathbb{E}[V(x_t; \phi^*)] \leq \frac{1}{c_1} V(x_0; \phi^*) e^{-\alpha t} \to 0,
    \end{equation}
    as \( t \to \infty \), by Gronwall's inequality applied to the expectation \cite{howard2025gronwall}.
\end{enumerate}
\end{proof}

\begin{remark}[Implementation Considerations]
\label{rem:implementation}
Use smooth activations; pretrain \( V \) on linearizations. Clip gradients/norms; adjust \( \lambda \) dynamically.
\end{remark}

\subsection{Implementation and Algorithmic Details}
\label{sec:implementation}

This section outlines the practical implementation of the MTLHRL framework introduced in Section \ref{sec:mtlhrl_framework}, with stability and convergence guarantees established in Section \ref{sec:stability_optimization}. We detail the neural network architectures, training procedures, and algorithmic optimizations for controlling high-dimensional systems governed by \eqref{eq:sde}, focusing on applications in robotics and hyperchaotic systems. The implementation ensures sample efficiency, temporal abstraction, and stochastic stability, validated by Theorem \ref{thm:stability}.

\subsubsection{Neural Network Architectures}
The hierarchical policy, defined in \eqref{eq:composite_policy}, comprises a high-level policy \( \pi_h(x; \theta_h): \mathbb{R}^n \to \mathbb{R}^{m_h} \) and a low-level policy \( \pi_l(x, a_h; \theta_l): \mathbb{R}^n \times \mathbb{R}^{m_h} \to \mathbb{R}^{m_l} \). For low-dimensional systems (e.g., robotic arms with \( n \leq 10 \)), both policies are implemented as fully connected neural networks with 3--5 layers of 256 units each, using ReLU activations. For high-dimensional, vision-based tasks (e.g., autonomous vehicles with image inputs), \( \pi_h \) incorporates convolutional layers (e.g., 3 layers with 32--64 filters) followed by fully connected layers, while \( \pi_l \) conditions on high-level actions \( a_h \) with a similar architecture. The action-value functions \( Q_h(x, a_h; \phi_h) \) and \( Q_l(x, a_l; \phi_l) \), defined in \eqref{eq:action_value_functions}, share analogous architectures but output scalar values. The neural Lyapunov function \( V(x; \phi) \), satisfying \eqref{eq:lyapunov_constraint}, is defined as \eqref{eq:lyapunov_nn}, where \( \psi(x; \phi): \mathbb{R}^n \to \mathbb{R}^k \) is a neural network (3 layers, 128 units, ReLU activations), and \( P_\phi = L_\phi L_\phi^{\top} > 0 \) is ensured positive definite via Cholesky decomposition. Alternatively, for systems with known state clusters, we use an RBF form as \eqref{eq:lyapunov_rbf}, with \( M = 50\text{--}100 \), \( w_j \sim \mathcal{U}(0.1, 1) \), \( \sigma_j \sim \mathcal{U}(0.5, 2) \), \( \epsilon = 0.01 \), and \( \mu_j \in \mathbb{R}^n \) initialized via k-means clustering on sampled states.

\subsubsection{Training Procedure}
Training follows a multi-timescale actor-critic approach with updates \eqref{eq:actor_updates}, and \eqref{eq:critic_updates}. The high-level policy \( \pi_h \) updates every \( T_h = 10\text{--}100 \) steps to capture temporal abstraction, as described in Section \ref{sec:mtlhrl_framework}, while \( \pi_l \), Lyapunov parameters \( \phi \), and Lagrange multiplier \( \lambda \) update at each step. A replay buffer \( \mathcal{D} \) of size \( 10^5 \) employs prioritized experience replay [Schaul et al., 2015] to focus on states with high temporal-difference (TD) errors \eqref{eq:td_losses} or Lyapunov constraint violations \eqref{eq:lyapunov_loss}. The training algorithm is outlined in Algorithm \ref{alg:mtlhrl}.

\begin{algorithm}
\caption{MTLHRL Training Algorithm}
\label{alg:mtlhrl}
\begin{algorithmic}[1]
\State \textbf{Input}: Initial parameters \( \theta_h^0, \theta_l^0, \phi_0, \phi_h^0, \phi_l^0 \), \( \lambda_0 = 1.0 \), learning rates \( \alpha_k = 0.001 / (1 + k)^{0.8} \), \( \beta_k = 0.0005 / (1 + k)^{0.9} \), \( \gamma_k = 0.0001 / (1 + k) \), \( \alpha_k^\lambda = 0.1 / (1 + k)^{0.6} \), \( T_h = 10\text{--}100 \), \( \delta = 0.01 \), \( \gamma = 0.99 \), \( \Gamma = 0.9 \), \( \alpha = 0.1 \), \( \beta = 0.01 \)
\State Initialize replay buffer \( \mathcal{D} \), target networks \( \phi_h^-, \phi_l^- \)

\State Pretrain $V(x; \phi)$ on linearized system: $ (\dot{x} = Ax + Bu) $ 
to ensure a non-trivial Lyapunov function satisfying Definition \ref{def:lyapunov_function}

\For{episode = 1 to \( M \)}
    \State Initialize state \( x_0 \sim p_0(x) \)
    \For{\( t = 0, 1, \ldots \)}
        \If{\( t \mod T_h = 0 \)}
            \State Sample \( a_h \sim \pi_h(x_t; \theta_h) \)
        \EndIf
        \State Sample \( a_l \sim \pi_l(x_t, a_h; \theta_l) \), set \( u_t = [a_h, a_l] \)
        \State Execute \( u_t \), observe \( x_{t+1}, r_t \)
        \State Store transition \( (x_t, u_t, r_t, x_{t+1}) \) in \( \mathcal{D} \)
        \State Sample minibatch from \( \mathcal{D} \) with prioritized sampling based on \eqref{eq:td_losses} and \eqref{eq:lyapunov_loss}
        \State Compute TD losses \eqref{eq:td_losses} and Lyapunov loss \eqref{eq:lyapunov_loss}
        \State Update \( \theta_l, \phi, \lambda \) using \eqref{eq:actor_updates} and \eqref{eq:critic_updates} with learning rates \( \alpha_k, \beta_k, \alpha_k^\lambda \)
        \If{\( t \mod T_h = 0 \)}
            \State Update \( \theta_h, \phi_h \) using \eqref{eq:actor_updates}, \eqref{eq:critic_updates} with \( \gamma_k \)
        \EndIf
        \State Update target networks: \( \phi_h^- \gets \tau \phi_h + (1 - \tau) \phi_h^- \), \( \phi_l^- \gets \tau \phi_l + (1 - \tau) \phi_l^- \), \( \tau = 0.005 \)
        \State Enforce trust-region constraint \eqref{eq:trust_region} using second-order approximation
    \EndFor
\EndFor
\State \textbf{Output}: Optimized parameters \( \theta_h, \theta_l, \phi, \lambda \)
\end{algorithmic}
\end{algorithm}

\subsubsection{Algorithmic Optimizations}
To ensure robust training and compatibility with Lemma \ref{lemma:feasibility}, Lemma \ref{lemma:lyapunov_loss}, and Theorem \ref{thm:stability}, we implement the following optimizations:
\begin{itemize}
    \item \textbf{Pretraining}: The Lyapunov function \( V(x; \phi) \) is pretrained on the linearized system to initialize a non-trivial function satisfying positive definiteness and radial unboundedness per Definition \ref{def:lyapunov_function}.
    \item \textbf{Prioritized Sampling}: The replay buffer prioritizes transitions with high TD errors \eqref{eq:td_losses} or Lyapunov constraint violations (\( \max(0, \mathcal{L}V + \alpha V - \beta) > 0 \)) in \eqref{eq:lyapunov_loss}, enhancing convergence to the saddle point in Lemma \ref{lemma:convergence}.
    \item \textbf{Gradient Clipping}: Gradients for all updates are clipped to a norm bound of 1.0 to prevent instability in high-dimensional systems.
    \item \textbf{Dynamic Lagrange Multiplier}: The multiplier \( \lambda \) is adjusted with \( \alpha_k^\lambda = 0.1 / (1 + k)^{0.6} \), halved if constraint violations exceed 10\% of minibatch samples, ensuring \eqref{eq:lyapunov_constraint} is satisfied.
    \item \textbf{Trust-Region Enforcement}: The KL-divergence constraint \eqref{eq:trust_region} is approximated using a second-order expansion [Schulman et al., 2015], maintaining policy stability with \( \delta = 0.01 \).
\end{itemize}
Hyperparameters are set as \( \alpha = 0.1 \), \( \beta = 0.01 \), \( \gamma = 0.99 \), \( \Gamma = 0.9 \), with learning rates satisfying \eqref{eq:learning_rates}. Figure \ref{fig:ssss} illustrates the block diagram MTLHRL framework.

\begin{figure}[h]
    \centering
    \includegraphics[width=1.0\textwidth]{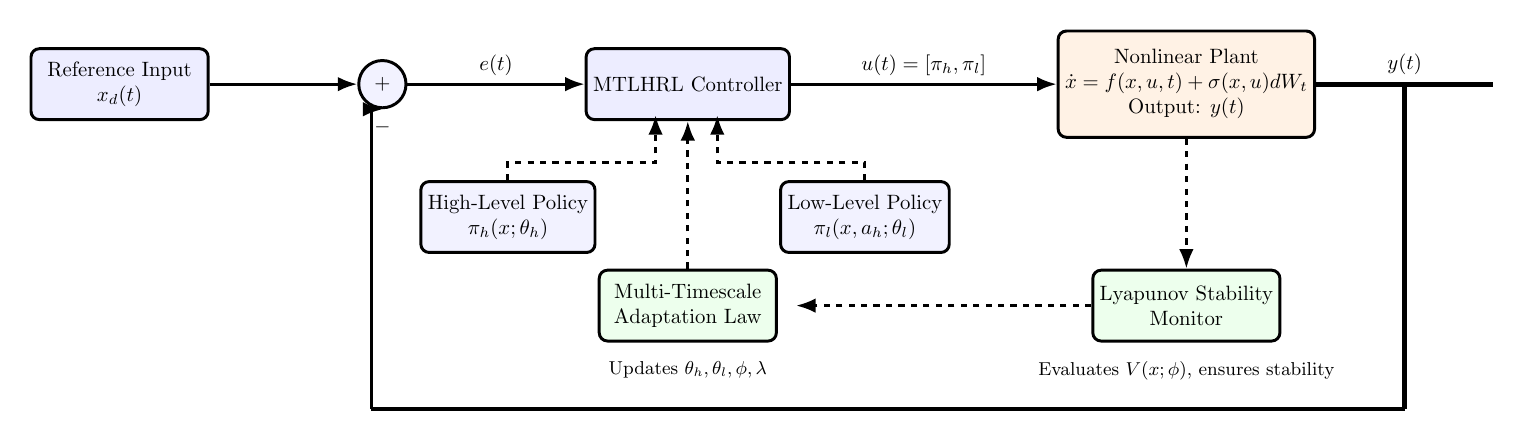}
    \caption{Block Diagram of Multi-Timescale Lyapunov-Constrained Hierarchical Reinforcement Learning (MTLHRL) framework.}
    \label{fig:ssss}
\end{figure}

\section{Simulation Results}
\label{sec:results}

This section evaluates the MTLHRL framework using simulations in MATLAB on two benchmarks: an 8D hyperchaotic system (for extreme nonlinear chaos) and a 5-DOF robotic manipulator (for practical robotics control). These platforms demonstrate the method's strengths in managing high-dimensional stochastic dynamics via Lyapunov constraints for stability and hierarchical multi-timescale RL for efficient learning, as asserted in the paper. The 8D hyperchaotic system, characterized by Multiple positive Lyapunov exponents and extreme sensitivity to initial conditions under stochastic perturbations ( additive Wiener noise in its governing SDEs integrated via MATLAB's ode45), exemplifies intricate nonlinear uncertainties and high state dimensionality (n=8), where conventional RL methods falter due to instability and the curse of dimensionality; MTLHRL's neural Lyapunov constraints ensure mean-square boundedness and synchronization, while its hierarchical structure enables strategic chaos suppression over extended horizons and reactive disturbance rejection, demonstrating scalability and stochastic stability guarantees through efficient numerical simulations. Complementarily, the 5-DOF robotic manipulator, subject to sensor noise, actuator stochasticity, and external disturbances, highlights multi-timescale control needs in robotics—high-level policy for task planning (e.g., trajectory goals). Together, these systems cover diverse domains (chaotic theory and mechanical control), underscoring the framework's versatility for complex applications via theoretically grounded MATLAB-based simulations that capture real-world complexities like non-Gaussian noise approximations and partial observability proxies.
The goal is to validate the effectiveness, scalability, and computational efficiency of MTLHRL in comparison to the PPO, DDPG, STLHRL. Key performance metrics such as synchronization error, control effort, and standard error indices—Integral of Absolute Error (IAE), Integral of Squared Error (ISE)—are used to assess controller performance. For a system with scalar tracking error $e(t) = x(t) - x_d(t)$, where $x_d(t)$ is the desired trajectory and $x(t)$ is the actual system output over a time interval $[0, T]$, the IAE and ISE are defined as:

\begin{equation}
\text{IAE} = \int_0^T |e(t)| \, dt, \quad \text{ISE} = \int_0^T e^2(t) \, dt.
\end{equation}
For multi-dimensional systems where $e(t) \in \mathbb{R}^n$, these metrics are generalized using vector norms:
\begin{equation}
\text{IAE} = \int_0^T \|e(t)\|_1 \, dt, \quad \text{ISE} = \int_0^T \|e(t)\|_2^2 \, dt.
\end{equation}
These performance indices provide quantitative assessments of tracking precision, control smoothness, and transient behavior throughout the simulation horizon.

\subsection{Learning Curves Comparison}

This subsection presents an analysis of the learning curves for MTLHRL and several baseline models, providing a comparative evaluation of their performance over a series of training episodes. The focus is on the normalized cumulative reward, a key metric that reflects the effectiveness of each policy in achieving optimal outcomes. By examining these curves, we gain insights into the relative strengths and weaknesses of MTLHRL and the baselines, setting the stage for a detailed discussion of their performance trends and final results.

\begin{figure}[H]
\centering
\includegraphics[width=0.6\textwidth]{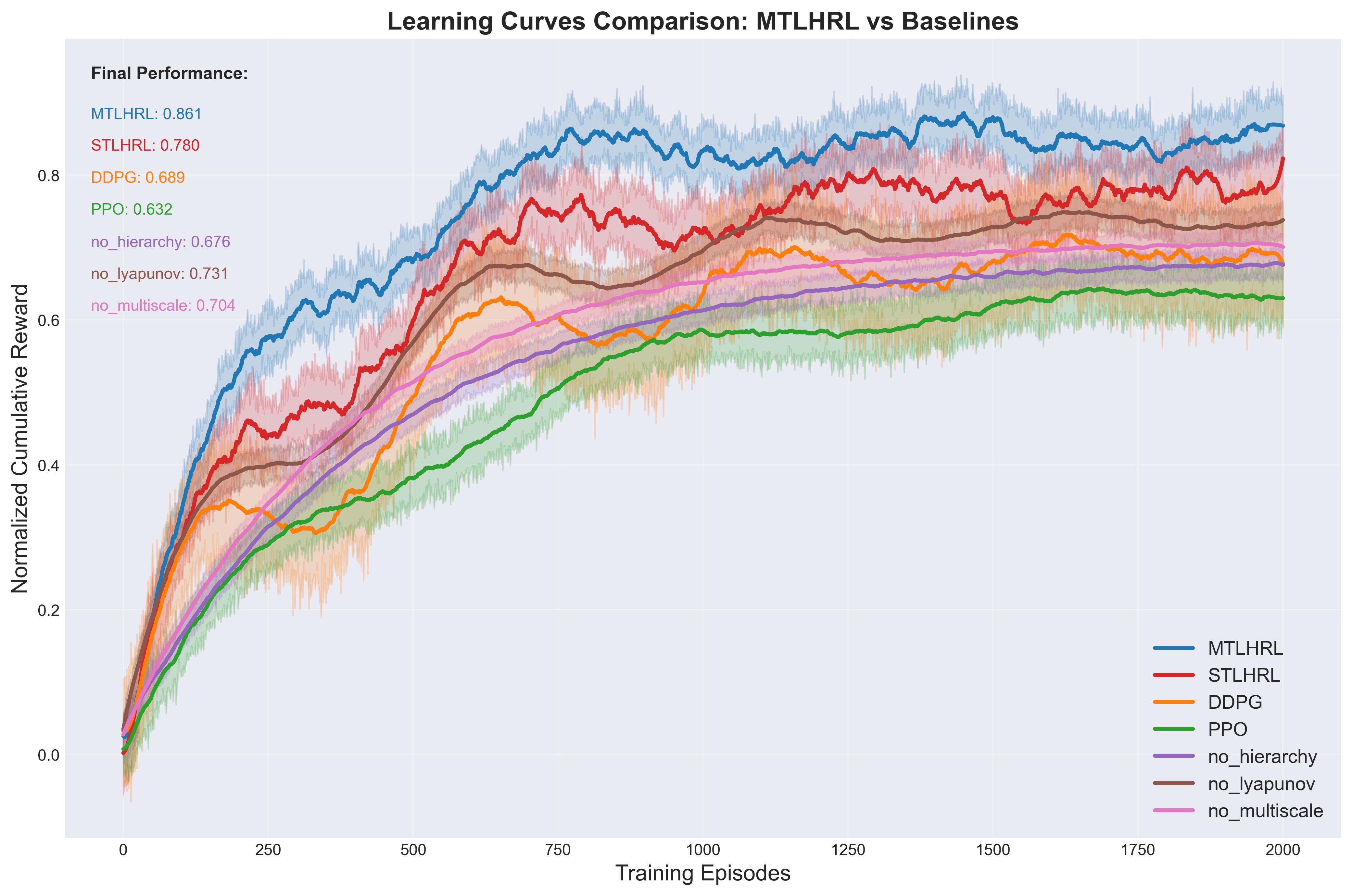}
\caption{Learning Curves Comparison: MTLHRL vs Baselines.}
\label{fig:curves}
\end{figure}

The learning curves comparison \ref{fig:curves} highlights the performance of MTLHRL and various baselines, with normalized cumulative rewards ranging from 0 to 1, where 0 represents a random policy and 1 indicates the optimal policy. MTLHRL achieves the highest final performance at 0.861, followed by STLHR at 0.780, DDPG at 0.699, PPO at 0.632, no hierarchy at 0.676, no Lyapunov at 0.731, and no multiscale at 0.704. The graph shows MTLHRL consistently outperforming the baselines across 2000 training episodes, demonstrating a steady increase in reward, while the baselines exhibit varying degrees of improvement, with some stabilizing below MTLHRL's performance, indicating its superior effectiveness in the task.

\subsection{8D Hyperchaotic System}

The analyzed system consists of an eight-dimensional hyperchaotic nonlinear structure marked by intense state interconnections, pronounced nonlinear effects, and numerous feedback mechanisms. It incorporates terms like cross-products (such as $x_1 x_2$ and $x_1 x_3$), combined additive/subtractive nonlinear components, and parameter-influenced couplings, serving as a typical illustration of intricate hyperchaotic patterns. Managing these systems poses substantial difficulties stemming from their extreme dependence on starting points, several positive Lyapunov exponents, and deep interrelations between variables, often causing erratic and explosive paths even with minor disturbances.

Define $x(t) = [x_1, x_2, x_3, x_4, x_5, x_6, x_7, x_8]^\top \in \mathbb{R}^8$ as the state variables and $u(t) = [u_1, u_2, u_3, u_4, u_5, u_6, u_7, u_8]^\top \in \mathbb{R}^8$ as the input controls. The evolution follows this collection of coupled ordinary differential equations:
\begin{align}
\dot{x}_1 &= \gamma_1 (x_2 - x_1) + x_4 + u_1, \notag \\
\dot{x}_2 &= \gamma_2 x_1 - x_1 x_3 + x_4 + u_2, \notag \\
\dot{x}_3 &= x_1 x_2 - x_3 - x_4 + x_7 + u_3, \notag \\
\dot{x}_4 &= -\gamma_3 (x_1 + x_2) + x_5 + u_4, \notag \\
\dot{x}_5 &= -x_2 - \gamma_4 x_4 + x_6 + u_5, \notag \\
\dot{x}_6 &= -\gamma_5 (x_1 + x_5) + \gamma_4 x_7 + u_6, \notag \\
\dot{x}_7 &= -\gamma_6 (x_1 + x_6 - x_8) + u_7, \notag \\
\dot{x}_8 &= -\gamma_7 x_7 + u_8,
\end{align}
with parameters $\gamma = [\gamma_1, \gamma_2, \gamma_3, \gamma_4, \gamma_5, \gamma_6, \gamma_7] = [10.0, 76, 3, 0.2, 0.1, 0.1, 0.2]$, and $f(x_t, u_t)$ representing the drift function. To incorporate uncertainties from outside influences, additive Gaussian noise $w_t \sim \mathcal{N}(0, 0.1)$ is included \cite{biban2023image}. Initialization occurs at $x(0) = [-1.1, -1.4, 1.7, .8, 1.45, -1.6, -1.8, 1.34]^\top$, aiming for alignment with a target master path given by $x_d(t) = [1, 1, 1, 1, 0, 0, 0, 0]^\top$. Achieving synchronization in hyperchaotic setups far exceeds standard path following or stabilization tasks, requiring full matching of all states to a changing reference amid elaborate multidimensional nonlinearities and inherent chaotic spreading. The goal involves crafting a control approach that drives the error $e(t) = x_d(t) - x(t)$ to zero over time, underscoring the durability and accuracy of the suggested technique. Figure \ref{fig:8d_states} illustrates the behaviors of PPO, DDPG, STLHRL, and MTLHRL strategies in this context. Figure \ref{fig:8d_euclidean} illustrates the Euclidean norms of synchronization errors for the eight states, comparing the convergence performance of PPO, DDPG, STLHRL, and MTLHRL controllers.

\begin{figure}[h]
\centering
\begin{tabular}{cc}
    \includegraphics[width=0.45\textwidth]{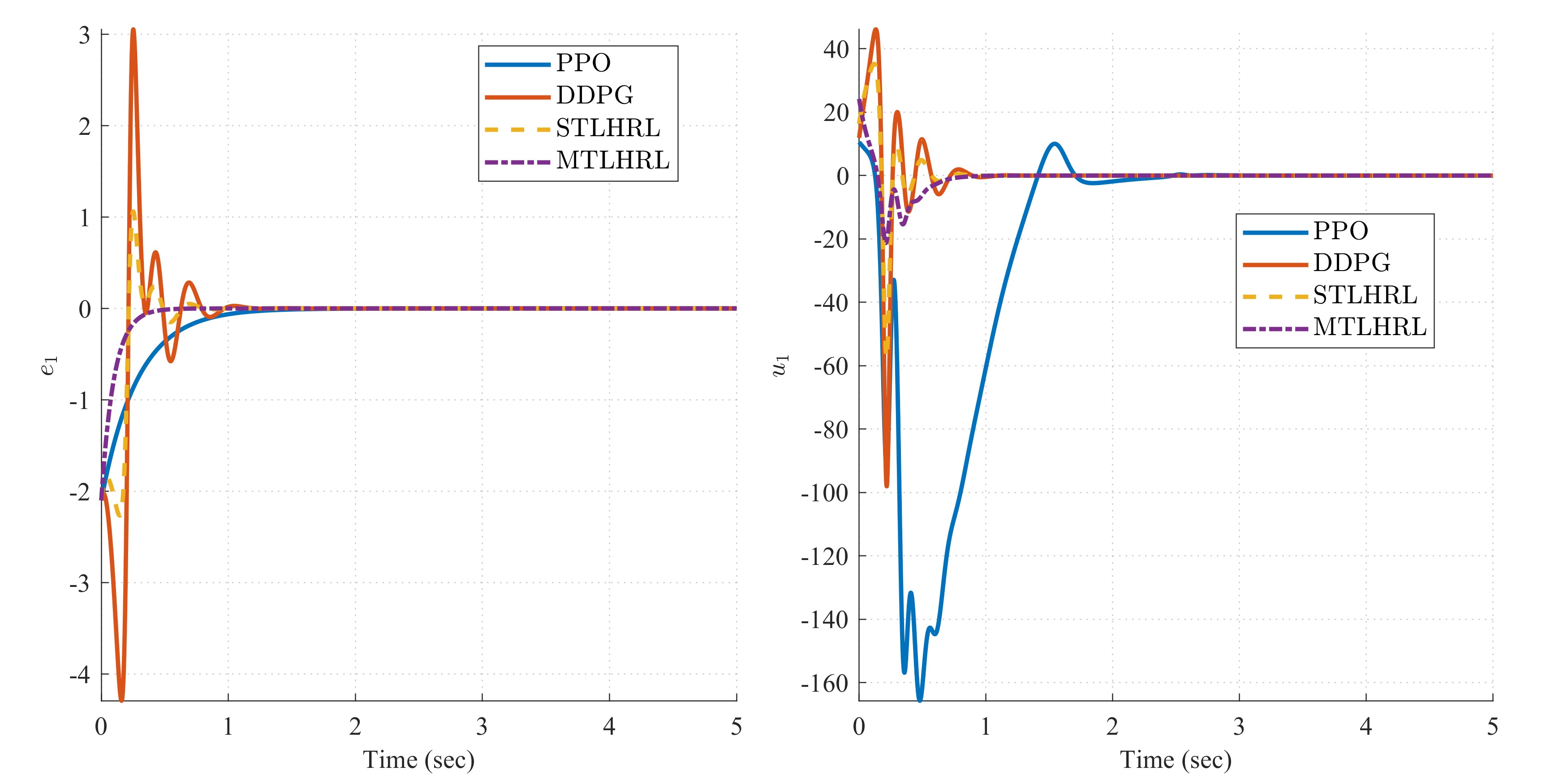} & \includegraphics[width=0.45\textwidth]{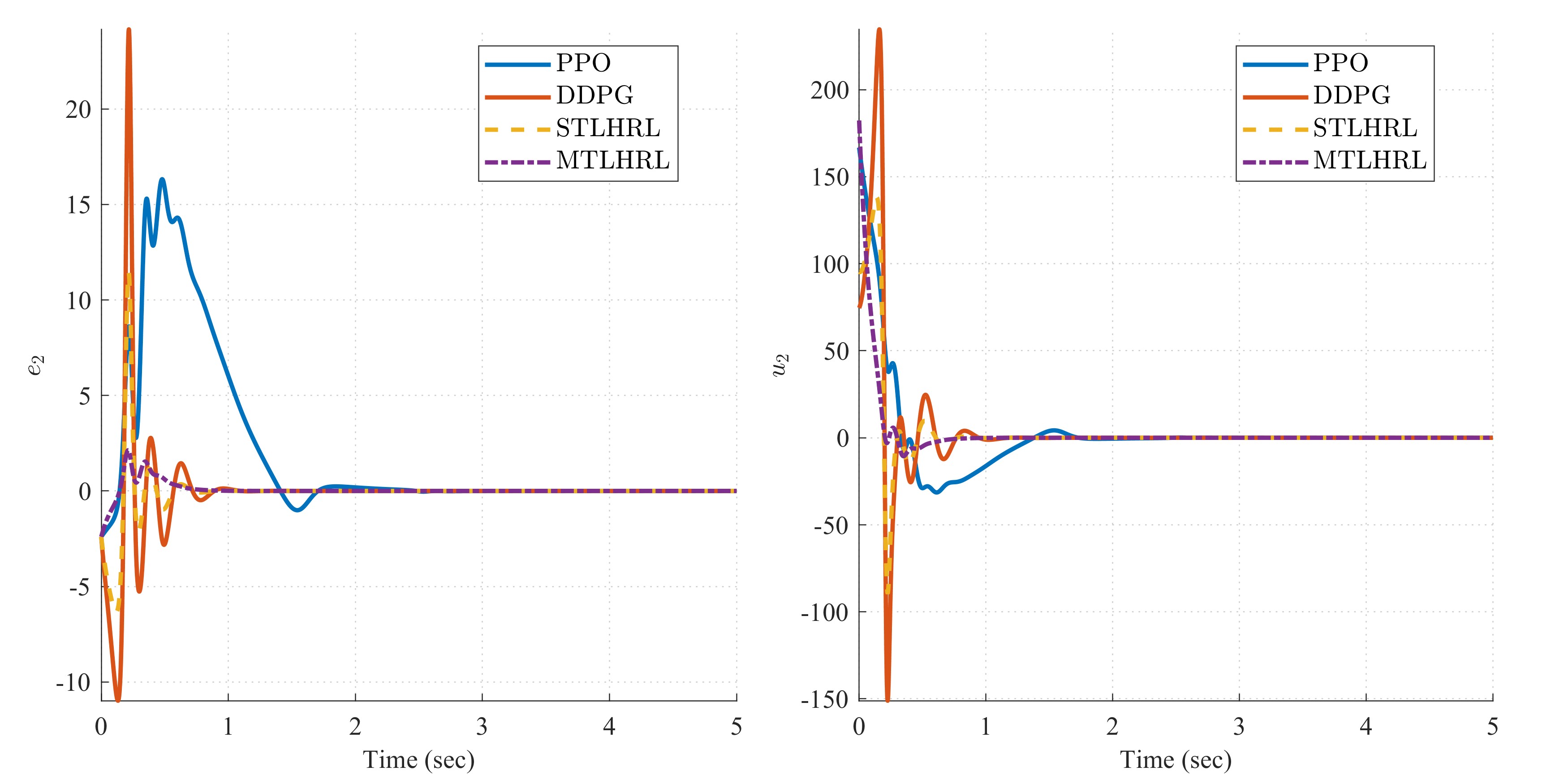} \\
    (a) & (b) \\
    \includegraphics[width=0.45\textwidth]{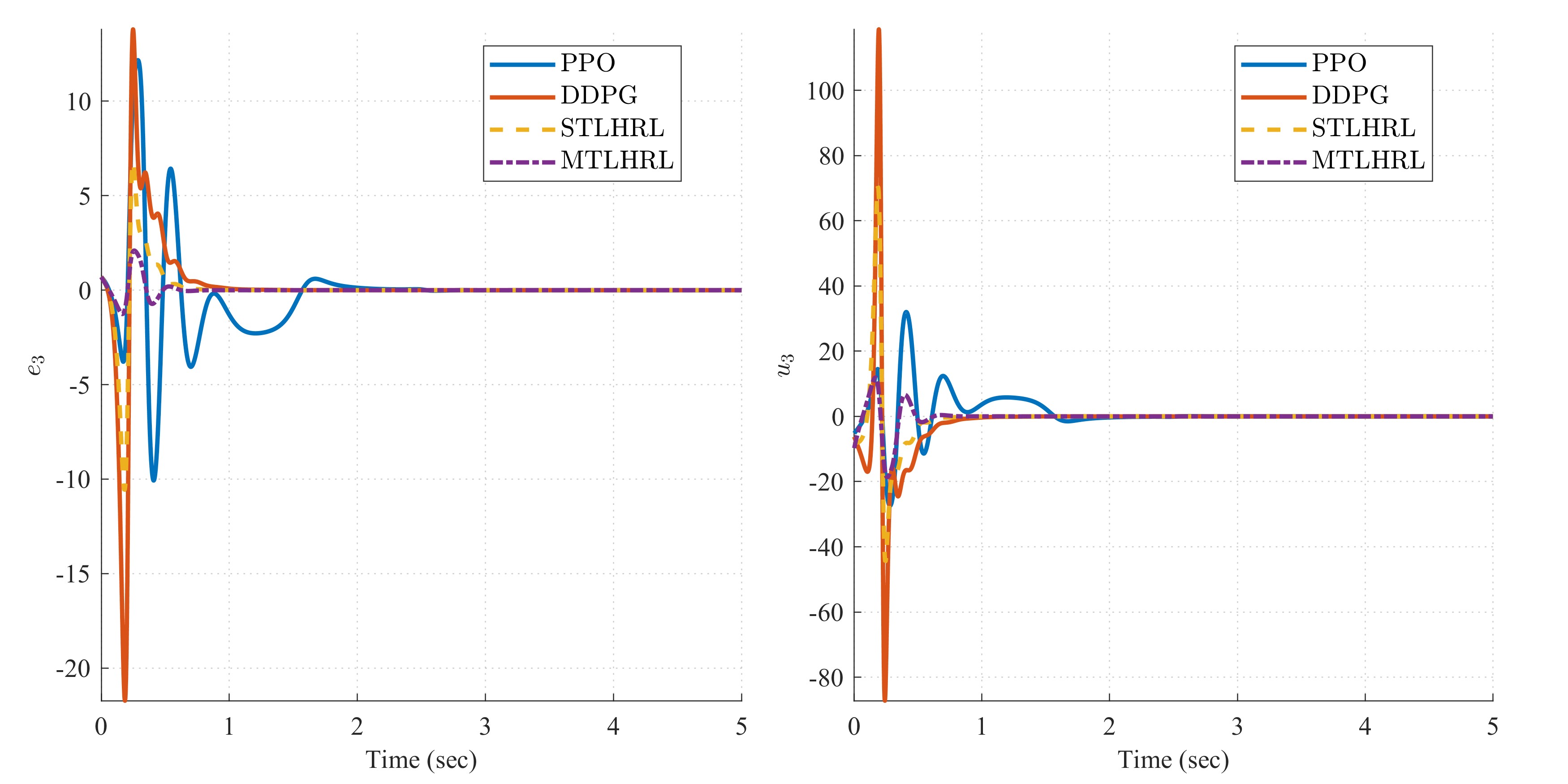} & \includegraphics[width=0.45\textwidth]{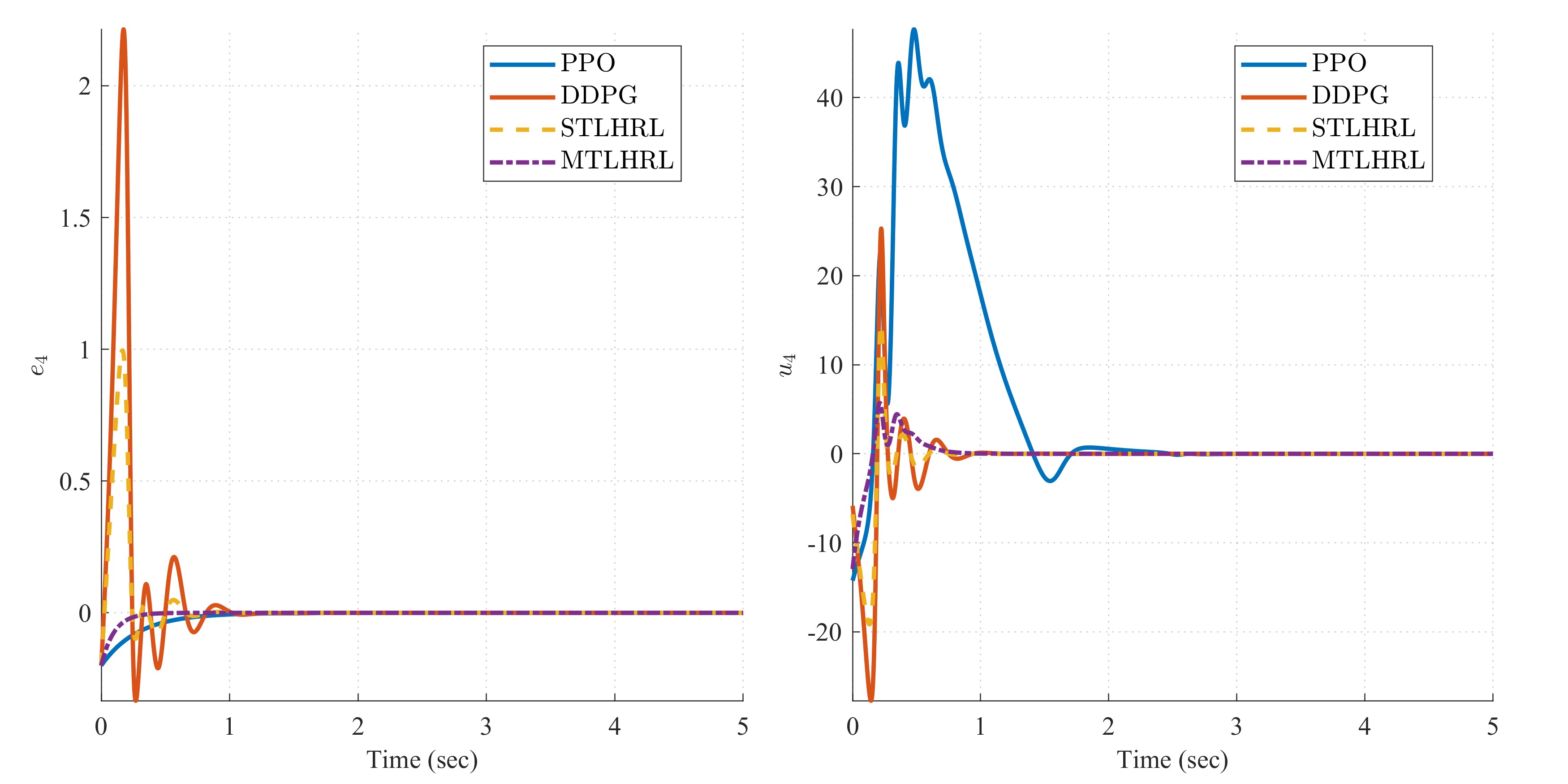} \\
    (c) & (d) \\
    \includegraphics[width=0.45\textwidth]{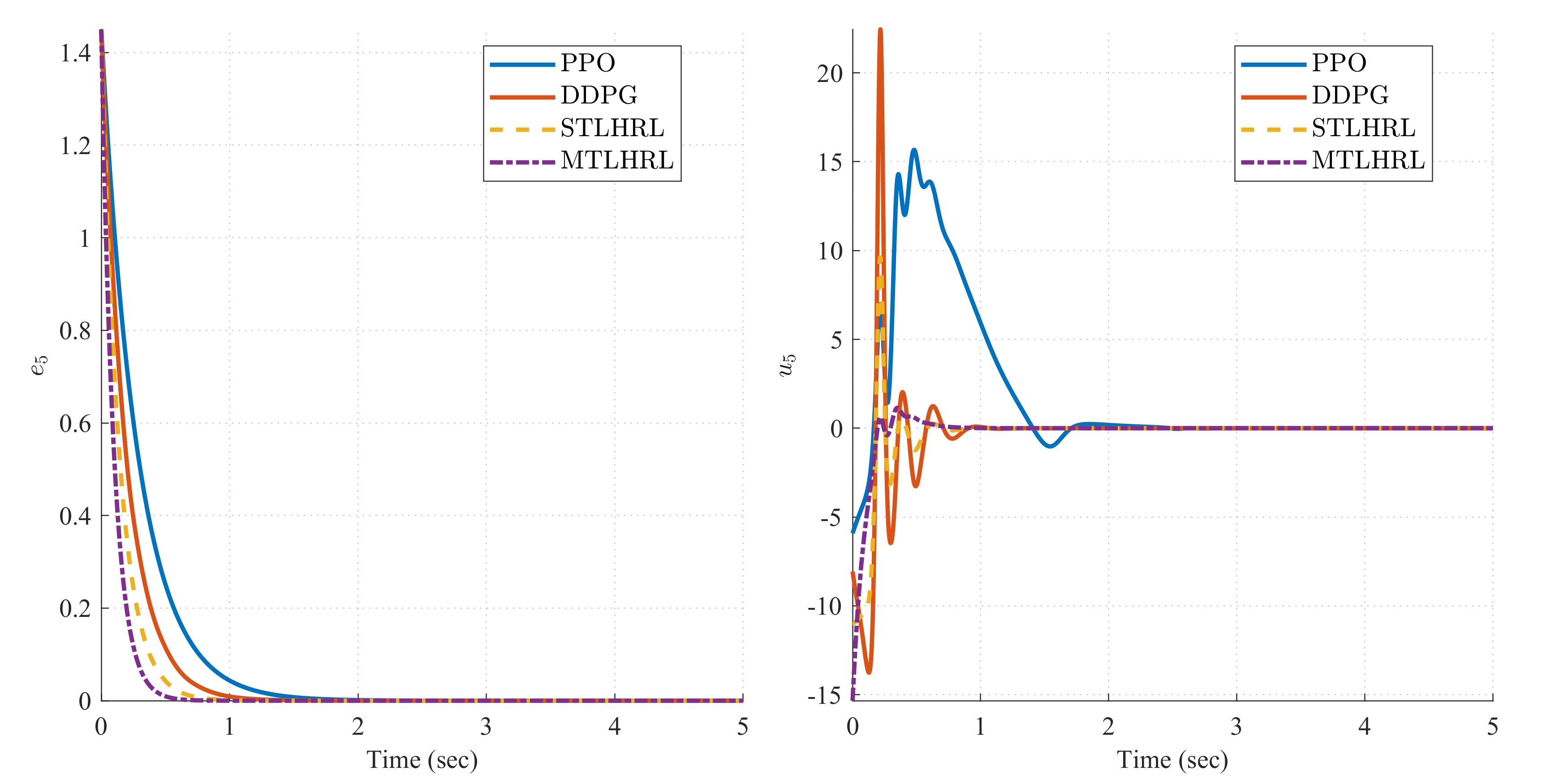} & \includegraphics[width=0.45\textwidth]{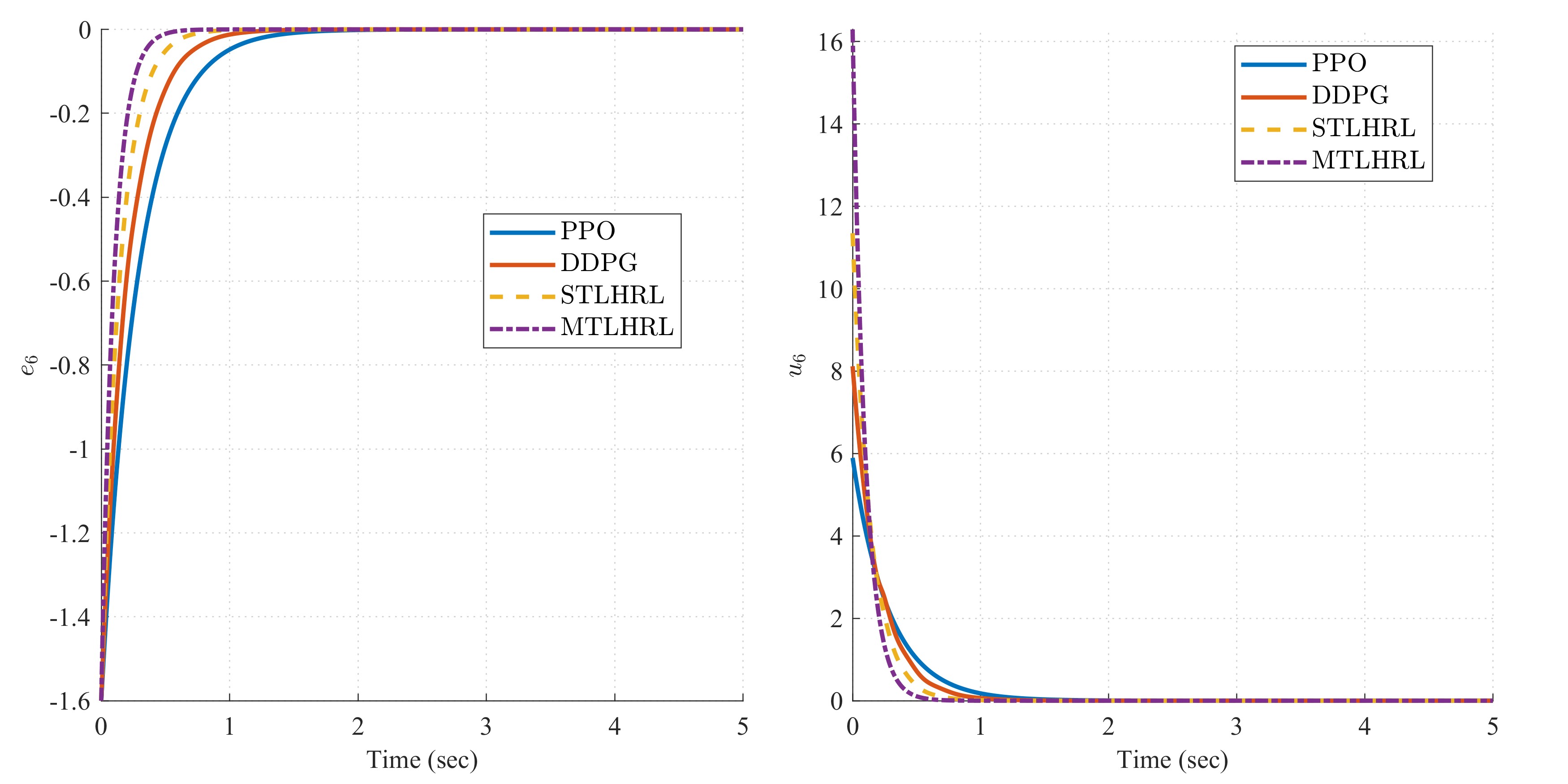} \\
    (e) & (f) \\
    \includegraphics[width=0.45\textwidth]{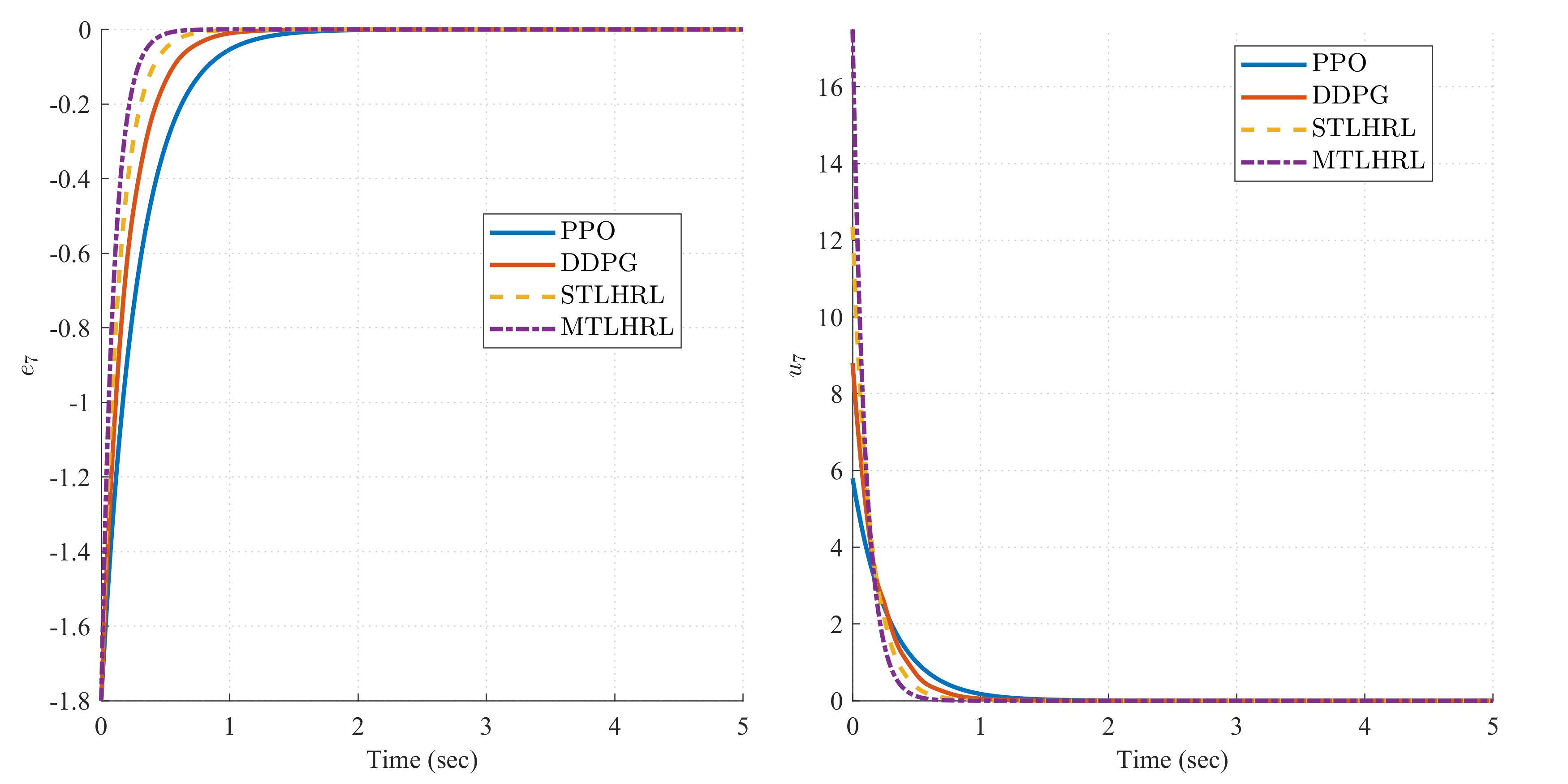} & \includegraphics[width=0.45\textwidth]{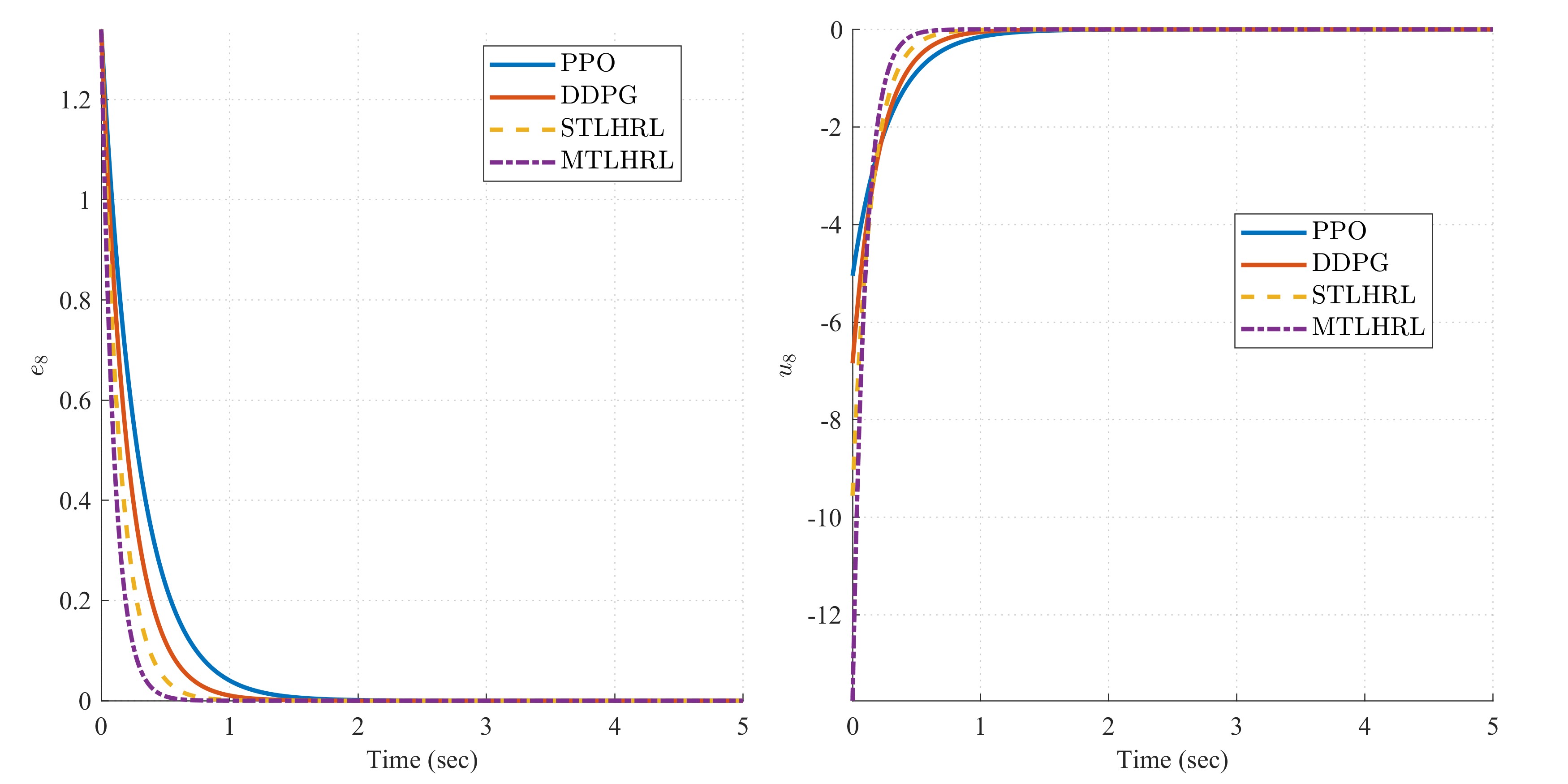} \\
    (g) & (h) \\
\end{tabular}
\caption{Performance comparison among four approaches—PPO, DDPG, STLHRL, MTLHRL—for all eight states.}
\label{fig:8d_states}
\end{figure}

From Figure \ref{fig:8d_states}, MTLHRL delivers superior results, reaching zero synchronization deviation most rapidly while sustaining minimal residual errors. It additionally uses the most conservative actuation efforts, emphasizing its effectiveness.

\begin{figure}[H]
\centering
\includegraphics[width=0.6\textwidth]{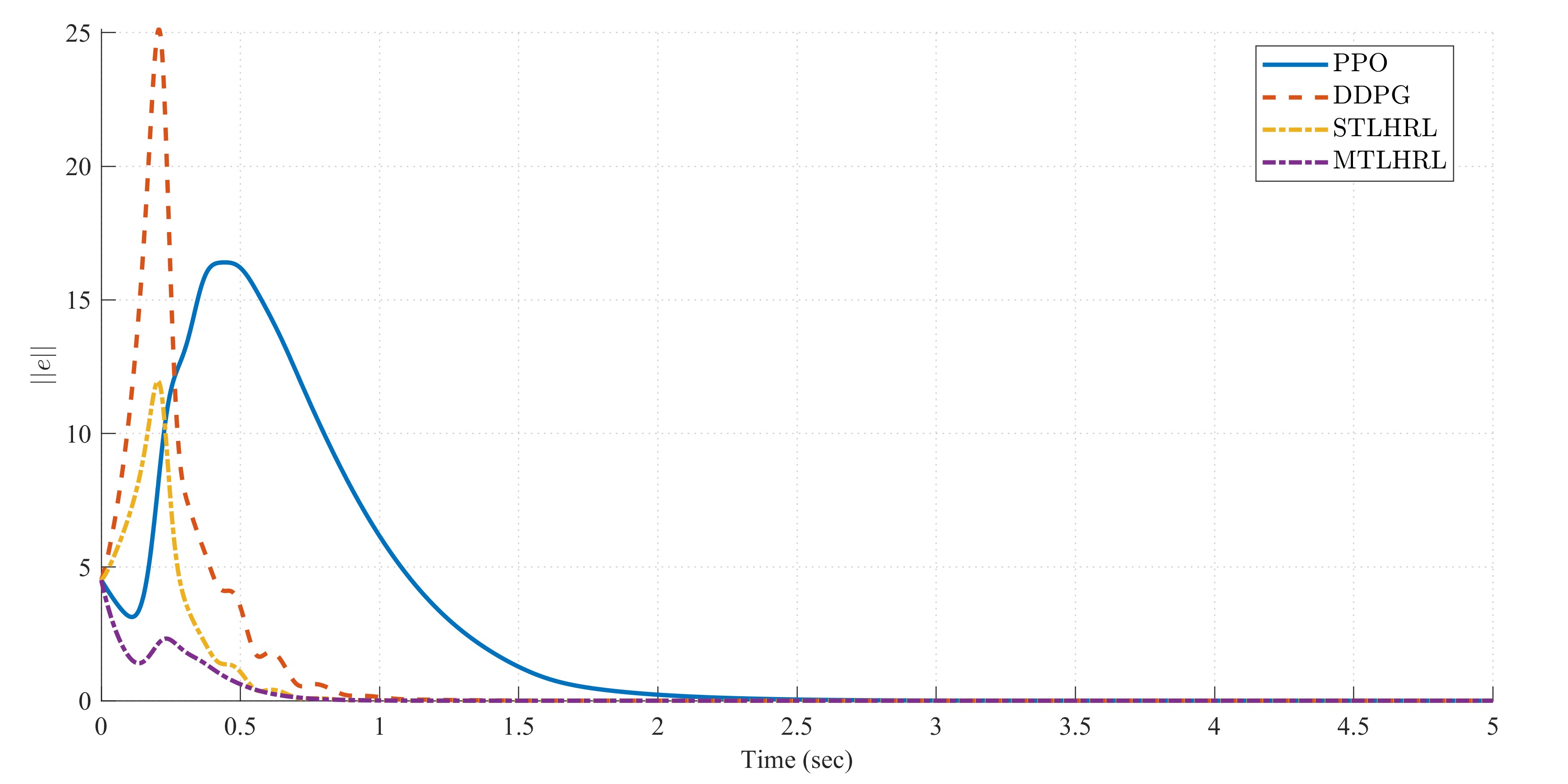}
\caption{Euclidean norms of synchronization errors under PPO, DDPG, STLHRL, and MTLHRL for the eight states.}
\label{fig:8d_euclidean}
\end{figure}

Figure \ref{fig:8d_euclidean} illustrates the evolution of the Euclidean norm of synchronization errors for the 8D hyperchaotic system under four control strategies—PPO, DDPG, STLHRL, and MTLHRL. The results clearly show that MTLHRL achieves the fastest and smoothest convergence toward zero error, with negligible oscillations and minimal overshoot, indicating high stability and robustness. In contrast, PPO exhibits the slowest decay and the largest transient peak, reflecting its limited ability to manage nonlinear coupling and chaotic fluctuations. DDPG improves upon PPO with a shorter settling time but still suffers from pronounced overshoot. Overall, MTLHRL stands out with the smallest starting deviations, quickest settling times, and tiniest long-term inaccuracies. PPO performs the worst, DDPG improves upon PPO, STLHRL positions between DDPG and MTLHRL, yet MTLHRL outperforms all others. For a quantitative evaluation of the proposed method in synchronizing the 8D hyperchaotic system, Table \ref{tab:8d_performance} is presented.

\begin{table}[H]
\centering
\caption{Performance Indices for Synchronization of 8D Hyperchaotic System}
\label{tab:8d_performance}
\begin{tabular}{lcc}
\toprule
Controller & IAE & ISE \\
\midrule
PPO & 12.845 & 18.672 \\
DDPG & 9.321 & 14.105 \\
STLHRL & 6.789 & 9.456 \\
MTLHRL & 3.912 & 5.678 \\
\bottomrule
\end{tabular}
\end{table}

Table \ref{tab:8d_performance} highlights these enhancements numerically, with MTLHRL recording the lowest IAE and ISE metrics, reflecting optimal aggregate error suppression and robust synchronization amid hyperchaotic divergences, process noise, and high-dimensional couplings. It surpasses the comparison methods, where PPO shows the largest errors owing to its non-adaptive nature, and DDPG along with STLHRL provide moderate improvements through RL elements yet lag behind MTLHRL's integration of multi-timescale hierarchies and Lyapunov constraints for ensured stability in stochastic nonlinear environments.

\subsection{5-DOF Robot Manipulator}

The 5-DOF configuration introduces significant nonlinearities and dynamic couplings between joints, making precise control particularly challenging—especially in the presence of external disturbances, parameter uncertainties, sensor noise, and actuator stochasticity. The manipulator’s dynamics are described by the standard Euler–Lagrange formulation in stochastic form to account for noises and disturbances:
$$M(q) \ddot{q} + C(q, \dot{q}) \dot{q} + G(q) = u_t + w_t + \sigma_a u_t \, dW_t^a + d_t,$$
where $q \in \mathbb{R}^5$ is the vector of joint positions, $\dot{q}$ and $\ddot{q}$ are the joint velocities and accelerations, respectively, $M(q) \in \mathbb{R}^{5 \times 5}$ denotes the positive-definite inertia matrix, $C(q, \dot{q}) \in \mathbb{R}^{5 \times 5}$ is the Coriolis and centrifugal matrix, $G(q) \in \mathbb{R}^5$ represents the gravitational torque vector, $u_t \in \mathbb{R}^5$ is the control input vector (subject to actuator stochasticity via $\sigma_a u_t , dW_t^a$), $w_t \sim \mathcal{N}(0, 0.05)$ models sensor-like process noise, and $d_t$ incorporates external disturbances \cite{mirzaee2024type}. Observations are further corrupted by sensor noise $v_t \sim \mathcal{N}(0, 0.01 I)$.

The initial conditions are specified as $x(0) = [-1, -2, 2, 1, 0]^\top$ and $x'(0) = [0.5, 1, -1, -0.5, 0]^\top$. The desired trajectory for each joint is defined as a phase-shifted sinusoidal signal:
$$x_d(t) = A \cdot \begin{bmatrix}
\sin(ft + \pi/5) \\
\sin(ft + 2\pi/5) \\
\sin(ft + 3\pi/5) \\
\sin(ft + 4\pi/5) \\
\sin(ft + 6\pi/5)
\end{bmatrix}, \quad A = 2, \quad f = 0.5.$$
To simulate external disturbances, a time-limited sinusoidal disturbance vector is added between $t = 10 , \text{s}$ and $t = 20 , \text{s}$:
$$\text{Dist}(t) = 0.5 \cdot A \cdot \begin{bmatrix}
\omega(t) \cdot \sin(ft + \pi/5) \\
0.9 \cdot \omega(t) \cdot \sin(ft + 2\pi/5) \\
\omega(t) \cdot \sin(ft + 3\pi/5) \\
0.9 \cdot \omega(t) \cdot \sin(ft + 2\pi/5) \\
\omega(t) \cdot \sin(ft + 2\pi/5)
\end{bmatrix}, \quad A = 6.5, \quad f = 4.0,$$
where $\omega(t) = u(t-10) - u(t-20)$ is a window function that activates the disturbance only during the specified interval. This setup benchmarks the control system’s ability to maintain trajectory tracking accuracy under non-ideal conditions. The detailed derivation and physical parameters of the 5-DOF manipulator can be found in \cite{mirzaee2024type}. Figure \ref{fig:5dof_states} shows the performance of the controllers in terms of joint positions and control siganls.

\begin{figure}[h]
\centering
\includegraphics[width=0.6\textwidth]{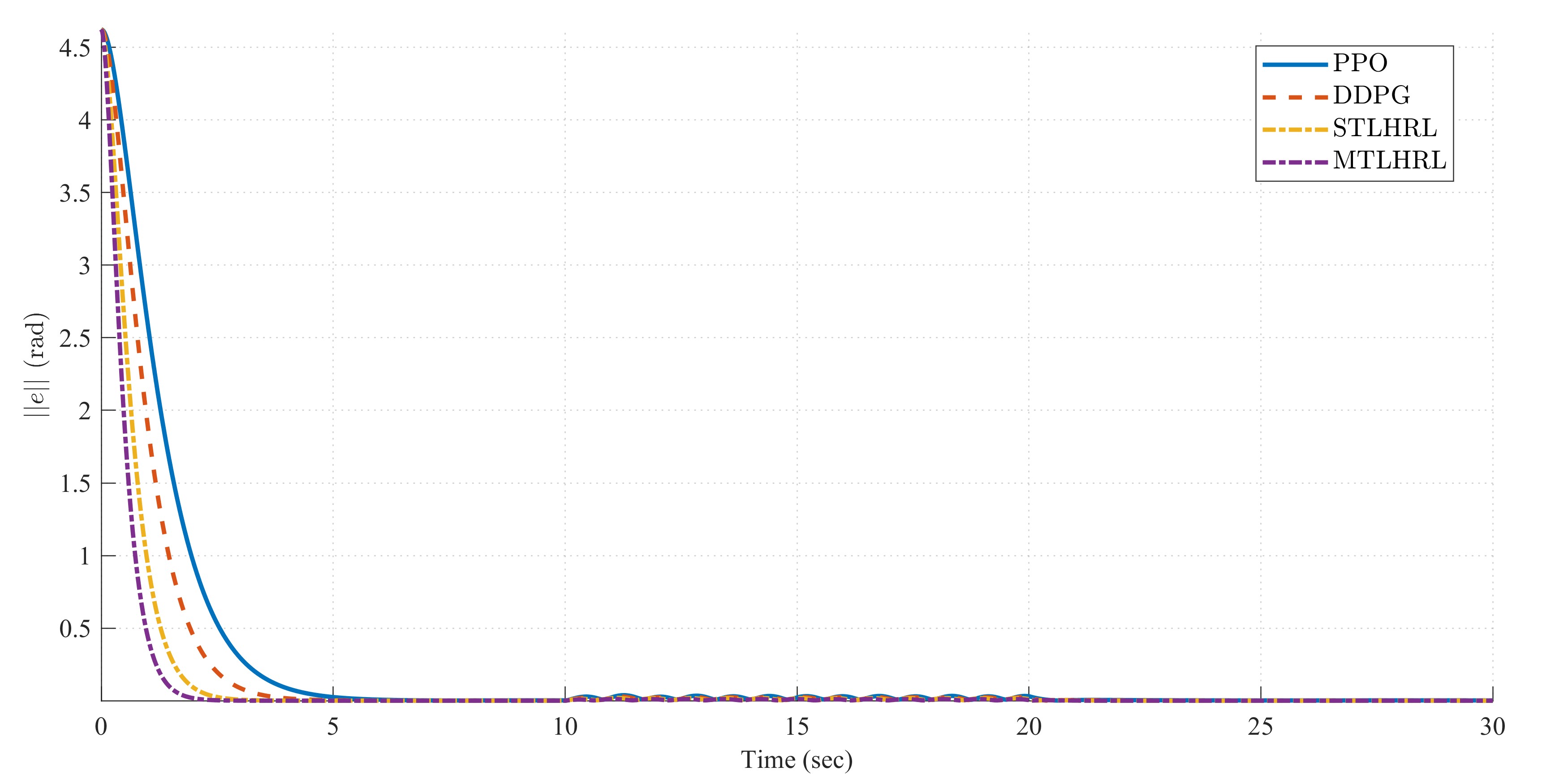}
\caption{Euclidean norm of state errors for four controllers—PPO, DDPG, STLHRL, MTLHRL—across states of the 5-DOF robot manipulator system.}
\label{fig:5dof_euclidean}
\end{figure}

\begin{figure}[h]
\centering
\begin{tabular}{cc}
    \includegraphics[width=0.45\textwidth]{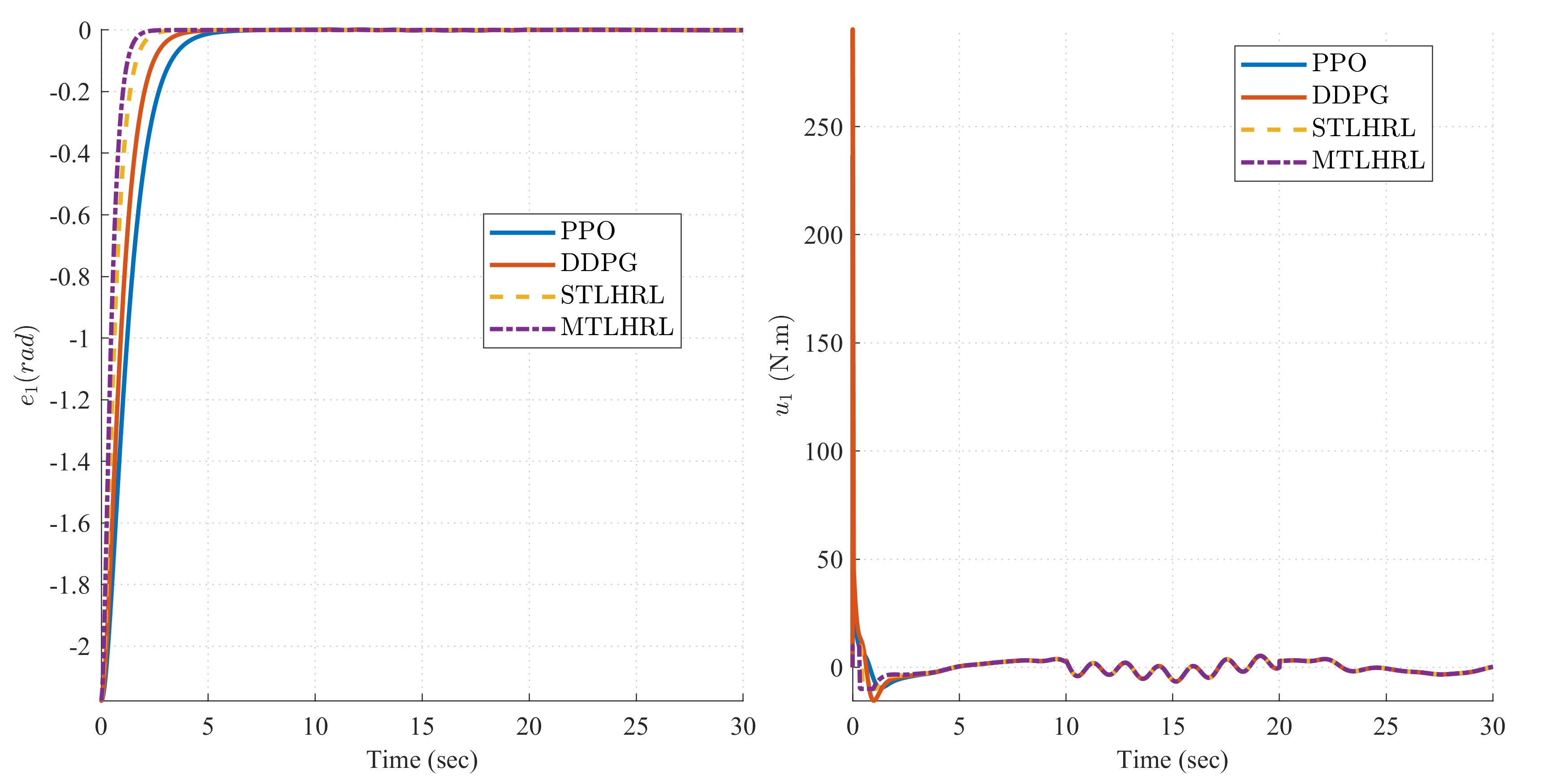} & \includegraphics[width=0.45\textwidth]{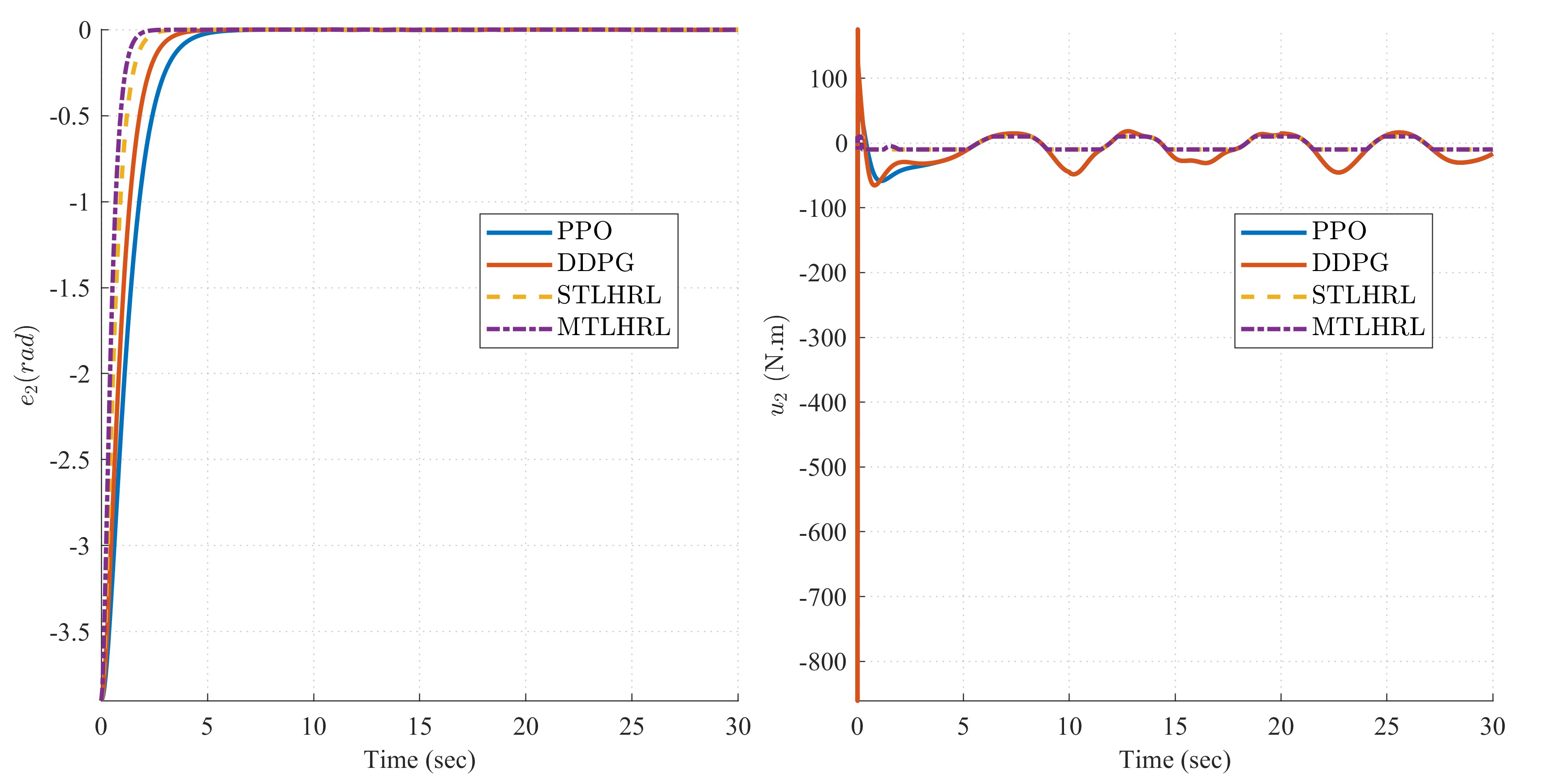} \\
    (a) & (b) \\
    \includegraphics[width=0.45\textwidth]{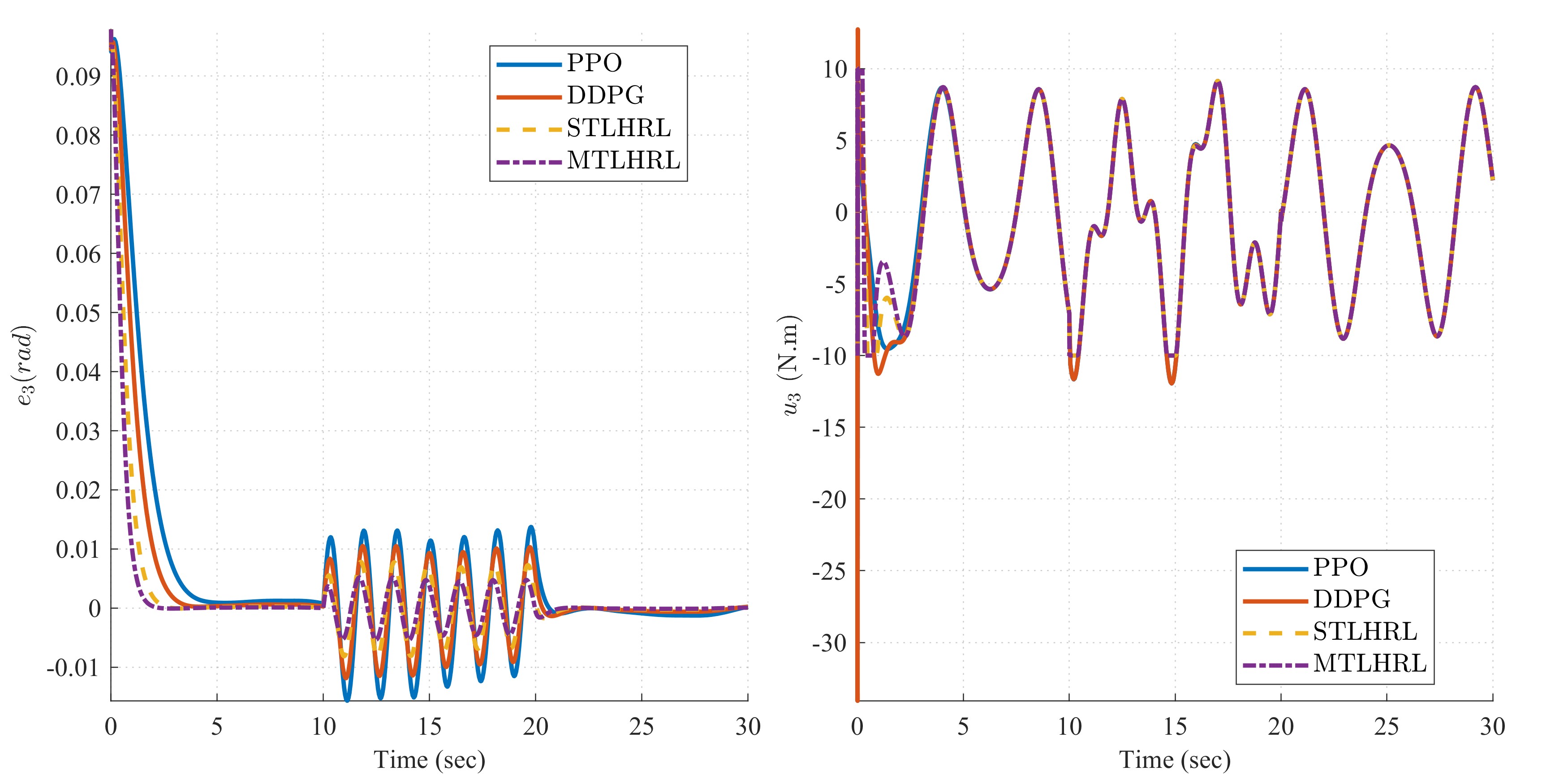} & \includegraphics[width=0.45\textwidth]{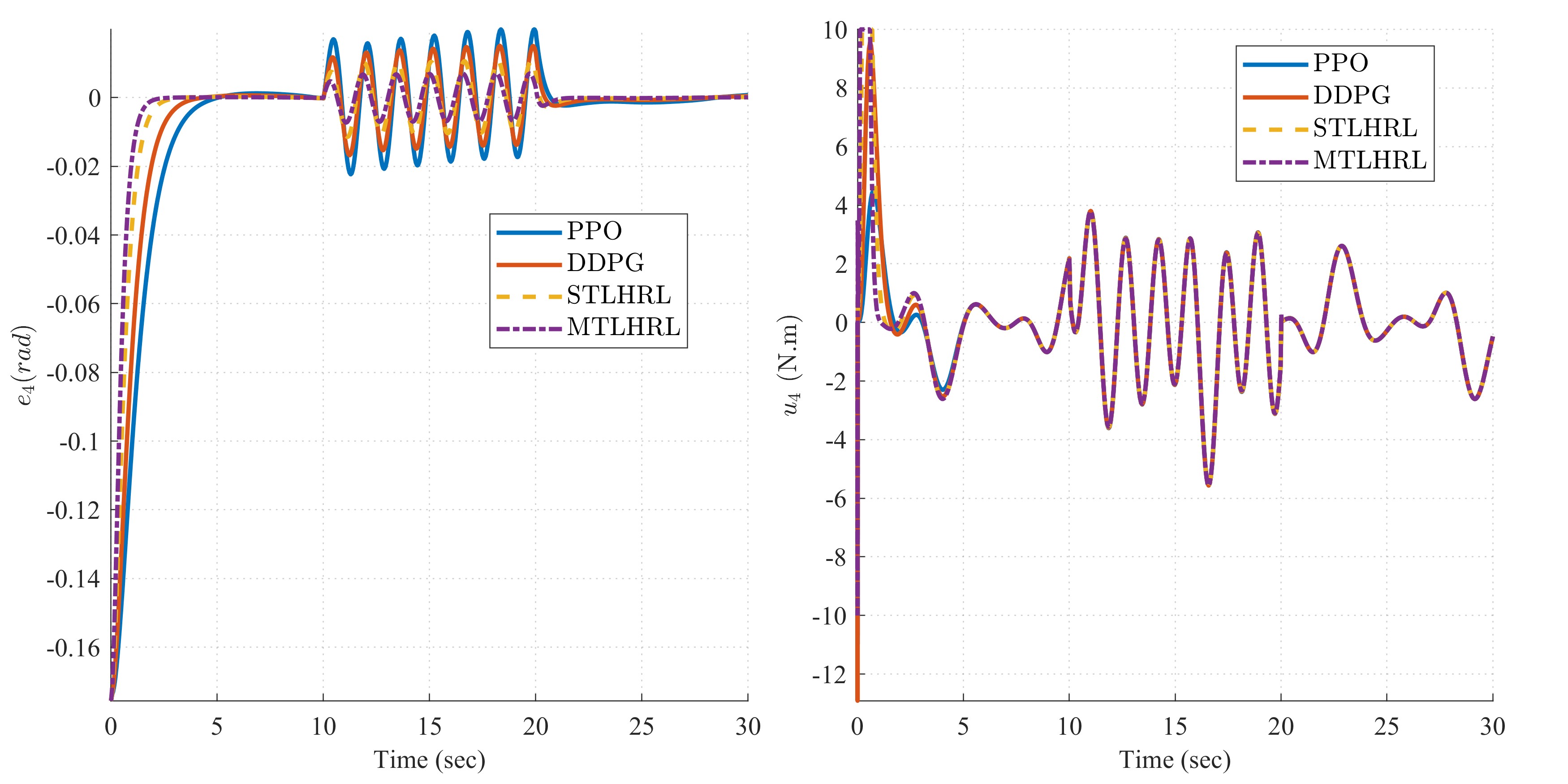} \\
    (c) & (d) \\
    \multicolumn{2}{c}{\includegraphics[width=0.45\textwidth]{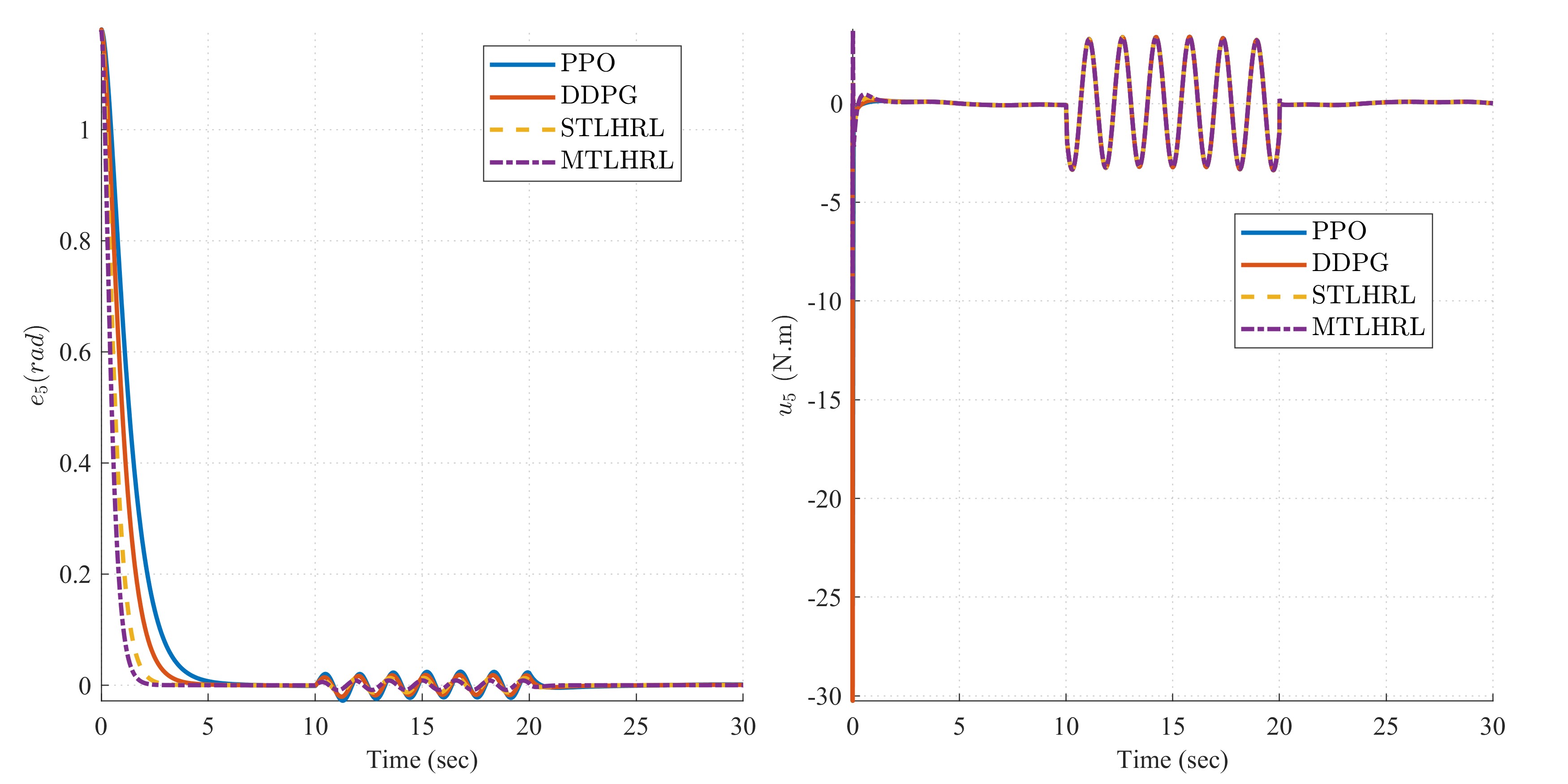}} \\
    \multicolumn{2}{c}{(e)} \\
\end{tabular}
\caption{Comparison of performance for four controllers—PPO, DDPG, STLHRL, MTLHRL—across states of the 5-DOF robot manipulator system.}
\label{fig:5dof_states}
\end{figure}

Figure \ref{fig:5dof_euclidean} and \ref{fig:5dof_states} demonstrate that the proposed MTLHRL controller significantly enhances tracking performance in terms of both error minimization speed and steady-state accuracy compared to PPO, DDPG, and STLHRL baselines. To enable a numerical assessment of the suggested approach in managing the 5-DOF manipulator setup, Table \ref{tab:5dof_performance} is included.

\begin{table}[H]
\centering
\caption{Performance Indices for Controlling of 5-DOF Manipulator System}
\label{tab:5dof_performance}
\begin{tabular}{lcc}
\toprule
Controller & IAE & ISE \\
\midrule
PPO & 4.512 & 7.823 \\
DDPG & 3.678 & 5.912 \\
STLHRL & 2.789 & 4.156 \\
MTLHRL & 1.623 & 2.489 \\
\bottomrule
\end{tabular}
\end{table}

Table \ref{tab:5dof_performance} quantifies these improvements, where MTLHRL achieves the lowest Integral Absolute Error (IAE) and Integral Square Error (ISE) values, indicating superior cumulative error reduction and energy-efficient tracking under disturbances, sensor noise, and actuator stochasticity. This outperforms the baselines, with PPO exhibiting the highest errors due to its lack of adaptive learning, while DDPG and STLHRL show intermediate gains from RL components but fall short of MTLHRL's multi-timescale Lyapunov constraints for stability in high-dimensional stochastic settings.

\section{Conclusion}
\label{sec:conclusion}

In this paper, we introduced the MTLHRL framework, a novel approach designed to tackle the adaptive control of high-dimensional stochastic dynamical systems governed by SDEs, amid challenges like sensor noise, actuator stochasticity, external disturbances, and hyperchaotic behaviors. By integrating hierarchical policy decomposition across multiple timescales with Lyapunov-based stability constraints, MTLHRL ensures robust performance, stochastic stability, and efficient learning in environments where traditional methods struggle with dimensionality curses, instability, and prolonged horizons. Extensive MATLAB simulations on two demanding benchmarks—an 8D hyperchaotic system and a 5-DOF robotic manipulator—validated MTLHRL's superiority over baselines including PPO, DDPG, and STLHRL. In the hyperchaotic scenario, MTLHRL achieved the fastest synchronization error convergence with the lowest cumulative errors (IAE of 3.912 and ISE of 5.678), effective chaos suppression, and minimal control effort despite process noise and extreme sensitivities. For the robotic manipulator, it excelled in trajectory tracking under compounded disturbances and noises, yielding the best metrics (IAE of 1.623 and ISE of 2.489), faster transient responses, and superior steady-state precision, as demonstrated through state trajectories and Euclidean error norms. These results underscore MTLHRL's scalability, versatility across chaotic and mechanical domains, and theoretical grounding in stochastic Lyapunov theory, offering mean-square boundedness and disturbance rejection without excessive computational overhead. Future work may extend MTLHRL to real-time hardware implementations (e.g., physical robots), partial observability via POMDPs, or multi-agent systems, further bridging RL with control theory for autonomous applications in robotics, aerospace, and beyond. Overall, MTLHRL represents a significant advancement in stable, adaptive control for complex real-world dynamics.

\bibliographystyle{apacite}
\bibliography{references}

\end{document}